\DeclareMathOperator*{\argmax}{arg\,max}
\DeclareMathOperator*{\argmin}{arg\,min}
\DeclareMathOperator{\E}{E}
\DeclareMathOperator{\Var}{Var}
\DeclareMathOperator{\prob}{P}
\DeclareMathOperator{\simil}{sim}
\newcommand{\overbar}[1]{\mkern 1.5mu\overline{\mkern-3mu#1\mkern-1.5mu}\mkern 1.5mu}
\newtheorem{theorem}{Theorem}
\newtheorem{lemma}[theorem]{Lemma}
\newtheorem{corollary}[theorem]{Corollary}
\newtheorem{proposition}[theorem]{Proposition}
\theoremstyle{definition}
\newtheorem*{definition}{Definition}
\title{\Large Leveraging Sparsity for Efficient\\ Submodular Data Summarization}
\author{
\normalsize
Erik M. Lindgren, Shanshan Wu, Alexandros G. Dimakis\\
\normalsize
The University of Texas at Austin \\
\normalsize
Department of Electrical and Computer Engineering \\
\normalsize
\texttt{erikml@utexas.edu}, \texttt{shanshan@utexas.edu}, \texttt{dimakis@austin.utexas.edu}
}
\date{}
\begin{document}
\frenchspacing

\maketitle

\begin{abstract} 
The facility location problem is widely used for summarizing large datasets and has additional applications in sensor placement, image retrieval, and clustering. One difficulty of this problem is that submodular optimization algorithms require the calculation of pairwise benefits for all items in the dataset. This is infeasible for large problems, so recent work proposed to only calculate nearest neighbor benefits. One limitation is that several strong assumptions were invoked to obtain provable approximation guarantees. In this paper we establish that these extra assumptions are not necessary---solving the sparsified problem will be almost optimal under the standard assumptions of the problem. We then analyze a different method of sparsification that is a better model for methods such as Locality Sensitive Hashing to accelerate the nearest neighbor computations and extend the use of the problem to a broader family of similarities. We validate our approach by demonstrating that it rapidly generates interpretable summaries.
\end{abstract} 

\section{Introduction}

In this paper we study the facility location problem: we are given sets $V$ of size $n$, $I$ of size $m$ and a \textit{benefit} matrix of nonnegative numbers $C \in \mathbb{R}^{I \times V}$, where $C_{iv}$ describes the benefit that element $i$ receives from element $v$. Our goal is to select a small set $A$ of $k$ columns in this matrix. 
Once we have chosen $A$, element $i$  will get a benefit equal to the best choice out of the available columns, $\max_{v \in A}C_{iv}$.
The total reward is the sum of the row rewards, so the optimal choice of columns is the solution of:
\begin{equation}\label{fl-equation}
\argmax_{\{A \subseteq V : \vert A \vert \leq k\}}\sum_{i \in I}\max_{v \in A}C_{iv}.
\end{equation}

A natural application of this problem is in finding a small set of representative images in a big dataset,  where $C_{iv}$ represents the similarity between images $i$ and $v$. The problem is to select $k$ images that provide a good \textit{coverage} of the full dataset, since each one has a close representative in the chosen set. 

Throughout this paper we follow the nomenclature common to the submodular optimization for machine learning literature. This problem is also known as the maximization version of the $k$-medians problem. A number of recent works have used this problem for selecting subsets of documents or images from a larger corpus \cite{lin2012learning, tschiatschek2014learning}, to identify locations to monitor in order to quickly identify important events in sensor or blog networks \cite{krause2008efficient, leskovec2007cost}, as well as clustering applications~\cite{krause2010budgeted, mirzasoleiman2013distributed}.

We can naturally interpret Problem \ref{fl-equation} as a maximization of a \textit{set function} $F(A)$ which takes as an input the selected set of columns and returns the total reward of that set. Formally, let $F(\emptyset) = 0$ and for all other sets $A \subseteq V$ define
\begin{equation}\label{obj}
F(A) = \sum_{i \in I}\max_{v \in A}C_{iv}.
\end{equation}
The set function $F$ is \textit{submodular}, since for all $j \in V$ and sets $A \subseteq B \subseteq V \setminus \{j\}$, we have
$F(A \cup \{j\}) - F(A) \geq F(B \cup \{j\}) - F(B)$, that is, the gain of an element is diminishes as we add elements.
Since the entries of $C$ are nonnegative, $F$ is \textit{monotone}, since for all $A \subseteq B \subseteq V$, we have $F(A) \leq F(B)$. We also have $F$ \textit{normalized},  since $F(\emptyset) = 0$. 

The facility location problem is NP-Hard, so we consider approximation algorithms. Like all monotone and normalized submodular functions, the greedy algorithm guarantees a $(1 - 1/e)$-factor approximation to the optimal solution \cite{nemhauser1978analysis}. The greedy algorithm starts with the empty set, then for $k$ iterations adds the element with the largest reward. This approximation is the best possible---the maximum coverage problem is an instance of the facility location problem, which was shown to be NP-Hard to optimize within a factor of $1 - 1/e + \varepsilon$ for all $\varepsilon > 0$ \cite{feige1998threshold}.

The problem is that the greedy algorithm has super-quadratic running time $\Theta(n m k)$ and in many datasets  $n$ and $m$ can be in the millions. For this reason, several recent papers have focused on accelerating the greedy algorithm. 
In \cite{leskovec2007cost}, the authors point out that if the benefit matrix is sparse, 
this can dramatically speed up the computation time. 
Unfortunately, in many problems of interest, data similarities or rewards are not sparse.
Wei et al.~\cite{wei2014fast} proposed to first \textit{sparsify} the benefit matrix and then run the greedy algorithm on this new sparse matrix. In particular, \cite{wei2014fast} considers $t$-nearest neighbor sparsification, i.e., keeping for each row the $t$ largest entries and zeroing out the rest. Using this technique they demonstrated an impressive 80-fold speedup over the greedy algorithm with little loss in solution quality. One limitation of their theoretical analysis was the limited setting under which provable approximation guarantees were established.

\textbf{Our Contributions:} Inspired by the work of Wei et al.~\cite{wei2014fast} we improve the theoretical analysis of the approximation error induced by sparsification. Specifically, the previous analysis assumes that the input came from a probability distribution where the preferences of each element of $i \in I$ are independently chosen uniformly at random. For this distribution, when $k = \Omega(n)$, they establish that the sparsity can be taken to be $O(\log n)$ and running the greedy algorithm on the sparsified problem will guarantee a constant factor approximation with high probability. We improve the analysis in the following ways:
\begin{itemize}
\item We prove guarantees for all values of $k$ and our guarantees do not require any assumptions on the input besides nonnegativity of the benefit matrix.
\item In the case where $k = \Omega(n)$, we show that it is possible to take the sparsity of each row as low as $O(1)$ while guaranteeing a constant factor approximation.
\item Unlike previous work, our analysis does not require the use of any particular algorithm and can be integrated to many algorithms for solving facility location problems.
\item We establish a lower bound which shows that our approximation guarantees are tight up to log factors, for all desired approximation factors.
\end{itemize}

In addition to the above results we propose a novel algorithm that uses a threshold based sparsification where we keep matrix elements that are above a set value threshold. This type of sparsification is easier to efficiently implement using nearest neighbor methods. For this method of sparsification, we obtain worst case guarantees and a lower bound that matches up to constant factors. We also obtain a data dependent guarantee which helps explain why  our algorithm empirically performs better than the worst case.

Further, we propose the use of Locality Sensitive Hashing (LSH) and random walk methods to accelerate approximate nearest neighbor computations. Specifically, we use two types of similarity metrics: inner products and personalized PageRank (PPR). We propose the use of fast approximations for these metrics and empirically show that they dramatically improve running times. LSH functions are well-known but, to the best of our knowledge, this is the first time they have been used to accelerate facility location problems. Furthermore, we utilize personalized PageRank as the similarity between vertices on a graph. Random walks can quickly approximate this similarity and we demonstrate that it yields highly interpretable results for real datasets.

\section{Related Work}\label{related}

The use of a sparsified proxy function was shown by Wei et al. to also be useful for finding a subset for training nearest neighbor classifiers \cite{wei2015submodularity}. Further, they also show a connection of nearest neighbor classifiers to the facility location function. The facility location function was also used by Mirzasoleiman et al. as part of a summarization objective function in \cite{mirzasoleiman2016fast}, where they present a summarization algorithm that is able to handle a variety of constraints.

The stochastic greedy algorithm was shown to get a $1 - 1/e - \varepsilon$ approximation with runtime $O(n m \log\frac{1}{\varepsilon})$, which has no dependance on $k$ \cite{mirzasoleiman2015lazier}. It works by choosing a sample set from $V$ of size $\frac{n}{k}\log\frac{1}{\varepsilon}$ each iteration and adding to the current set the element of the sample set with the largest gain.

Also, there are several related algorithms for the streaming setting \cite{badanidiyuru2014streaming} and distributed setting \cite{barbosa2015power, kumar2015fast, mirrokni2015randomized, mirzasoleiman2013distributed}. Since the objective function is defined over the entire dataset, optimizing the facility location function becomes more complicated in these memory limited settings. Often the function is estimated by considering a randomly chosen subset from the set $I$.

\subsection{Benefits Functions and Nearest Neighbor Methods}
For many problems, the elements $V$ and $I$ are vectors in some feature space where the benefit matrix is defined by some similarity function $\simil$. For example, in $\mathbb{R}^d$ we may use the RBF kernel $\simil(x,y) = e^{-\gamma\|x - y\|_2^2}$, dot product $\simil(x,y) = x^T y$, or cosine similarity $\simil(x,y) = \frac{x^T y}{\|x\|\|y\|}$.

There has been decades of research on nearest neighbor search in geometric spaces. If the vectors are low dimensional, then classical techniques such as kd-trees \cite{bentley1975multidimensional} work well and are exact. However it has been observed that as the dimensions grow that the runtime of all known exact methods does little better than a linear scan over the dataset.

As a compromise, researchers have started to work on approximate nearest neighbor methods, one of the most successful approaches being locality sensitive hashing \cite{gionis1999similarity, indyk1998approximate}. LSH uses a hash function that hashes together items that are close. Locality sensitive hash functions exist for a variety of metrics and similarities such as Euclidean \cite{datar2004locality}, cosine similarity \cite{andoni2015practical, charikar2002similarity}, and dot product \cite{neyshabur2015symmetric, shrivastava2014asymmetric}. Nearest neighbor methods other than LSH that have been shown to work for machine learning problems include \cite{beygelzimer2006cover, chen2009fast}. Additionally, see \cite{garcia2010k} for efficient and exact GPU methods.

An alternative to vector functions is to use similarities and benefits defined from graph structures. For instance, we can use the \textit{personalized PageRank} of vertices in a graph to define the benefit matrix \cite{page1999pagerank}. The personalized PageRank is similar to the classic PageRank, except the random jumps, rather than going to anywhere in the graph, go back to the users ``home'' vertex. This can be used as a value of ``reputation'' or ``influence'' between vertices in a graph \cite{gupta2013wtf}.

There are a variety of algorithms for finding the vertices with a large PageRank personalized to some vertex. One popular one is the random walk method. If $\pi_i$ is the personalized PageRank vector to some vertex $i$, then $\pi_i(v)$ is the same as the probability that a random walk of geometric length starting from $i$ ends on a vertex $v$ (where the parameter of the geometric distribution is defined by the probability of jumping back to $i$) \cite{avrachenkov2007monte}. Using this approach, we can quickly estimate all elements in the benefit matrix greater than some value $\tau$.

\section{Guarantees for $t$-Nearest Neighbor Sparsification}

We associate a bipartite support graph $G = (V, I, E)$ by having an edge between $v \in V$ and $i \in I$ whenever $C_{ij} > 0$. If the support graph is sparse, we can use the graph to calculate the gain of an element much more efficiently, since we only need to consider the neighbors of the element versus the entire set $I$. If the average degree of a vertex $i \in I$ is $t$, (and we use a cache for the current best value of an element $i$) then we can execute greedy in time $O(m t k)$. See Algorithm \ref{algo} in the Appendix for pseudocode. If the sparsity $t$ is much smaller than the size of $V$, the runtime is greatly improved.

However, the instance we wish to optimize may not be sparse. One idea is to sparsify the original matrix by only keeping the values in the benefit matrix $C$ that are $t$-nearest neighbors, which was considered in \cite{wei2014fast}. That is, for every element $i$ in $I$, we only keep the top $t$ elements of $C_{i1}, C_{i2}, \ldots, C_{in}$ and set the rest equal to zero. This leads to a matrix with $mt$ nonzero elements. We then want the solution from optimizing the sparse problem to be close to the value of the optimal solution in the original objective function $F$.

Our main theorem is that we can set the sparsity parameter $t$ to be $O(\frac{n}{\alpha  k}\log\frac{m}{\alpha k})$---which is a significant improvement for large enough $k$---while still having the solution to the sparsified problem be at most a factor of $\frac{1}{1 + \alpha}$ from the value of the optimal solution.

\begin{theorem}\label{main-theorem}
Let $O_t$ be the optimal solution to an instance of the facility location problem with a benefit matrix that was sparsified with $t$-nearest neighbor. For any $t \geq t^*(\alpha) = O(\frac{n}{\alpha k} \log \frac{n}{\alpha k})$, we have $F(O_t) \geq \frac{1}{1 + \alpha}\mathrm{OPT}$.
\end{theorem}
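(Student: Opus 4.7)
The plan is to use the probabilistic method to exhibit a set $A$ of size at most $k$ with $F_t(A) \ge \frac{1}{1+\alpha}\mathrm{OPT}$, where $F_t$ denotes the facility location objective evaluated on the sparsified matrix (keeping only the top $t$ entries in each row and zeroing out the rest). Since the sparsified entries are pointwise bounded above by the original entries, we then immediately have $F(O_t) \ge F_t(O_t) \ge F_t(A) \ge \frac{1}{1+\alpha}\mathrm{OPT}$.

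The first step is a structural observation about top-$t$ sparsification. Let $O = \{o_1,\dots,o_k\}$ be optimal, let $o^*_i \in O$ attain $\max_{v\in O} C_{iv}$, and let $N_i^t$ denote the top-$t$ columns of row $i$. Two cases cover every $i$: (a) if $o^*_i \in N_i^t$ and $o^*_i \in A$, then the $(i,o^*_i)$ entry survives sparsification, so row $i$ contributes at least $C_{i,o^*_i}$ to $F_t(A)$; (b) if $o^*_i \notin N_i^t$ and $A \cap N_i^t \neq \emptyset$, then any $v \in A \cap N_i^t$ has $C_{iv} \ge \min_{w \in N_i^t} C_{iw} \ge C_{i,o^*_i}$ by definition of top-$t$, and again row $i$ contributes at least $C_{i,o^*_i}$.

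The second step is to build a random $A$ that triggers one of these cases for every $i$. For a parameter $q \in (0,1)$, form $A$ from $k$ independent slots: slot $j$ holds $o_j$ with probability $1-q$ and a uniformly random element of $V$ with probability $q$; as a set $|A| \le k$. For $i$ in case (a), $o^*_i$ survives in its slot with probability $1-q$, so $\E[F_t(A)_i] \ge (1-q)\,C_{i,o^*_i}$. For $i$ in case (b), by independence across slots the probability that no random slot lands in $N_i^t$ is exactly $(1-qt/n)^k \le e^{-qkt/n}$, so $\E[F_t(A)_i] \ge (1 - e^{-qkt/n})\,C_{i,o^*_i}$. Summing over $i$,
\[
\E[F_t(A)] \;\ge\; \min\!\bigl(1-q,\; 1-e^{-qkt/n}\bigr)\cdot \mathrm{OPT}.
\]
I would then set $q = \alpha/(1+\alpha)$, so that $1-q = 1/(1+\alpha)$, and ask that $1 - e^{-qkt/n} \ge 1/(1+\alpha)$, i.e., $t \ge \frac{n(1+\alpha)}{\alpha k}\log\frac{1+\alpha}{\alpha}$, which is absorbed into the stated $t^*(\alpha) = O\!\bigl(\frac{n}{\alpha k}\log\frac{n}{\alpha k}\bigr)$. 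Since the expectation meets the bound, some realization of $A$ does too, and the theorem follows.

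The main obstacle I expect is the case (b) observation: one has to notice that whenever $o^*_i$ fails to survive sparsification, \emph{any} top-$t$ neighbor of $i$ is automatically at least as good as $o^*_i$ itself, because the entries dropped by top-$t$ sparsification are exactly those smaller than everything kept. This asymmetry between the two cases is what lets us combine ``preserve $o^*_i$'' and ``hit $N_i^t$'' as two substitutable ways of covering row $i$. After that, the independence across the $k$ slots makes the expectation bound elementary, and $q$ is chosen to place the two error sources on equal footing.
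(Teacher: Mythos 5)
Your proof is correct, and it takes a genuinely different route from the paper's. The paper's proof first invokes Lemma~\ref{key-lemma} to produce, via the probabilistic method, a set $\Gamma$ of at most $\alpha k$ columns that covers \emph{every} row of $I$ in the $t$-nearest-neighbor graph; it then writes $F(O)\le F(O\cup\Gamma)=F^{(t)}(O\cup\Gamma)\le F^{(t)}(O_t^{(1+\alpha)k})$ and pays the factor $1+\alpha$ through a separate submodularity lemma (Lemma~\ref{grow-lemma}) bounding how much the optimum can grow when the budget increases from $k$ to $(1+\alpha)k$. You instead randomize the optimal solution itself, and your per-row dichotomy (either $o_i^*$ survives sparsification and is retained, or every surviving neighbor of $i$ is automatically at least as good as $o_i^*$) lets you charge each row to $C_{i,o_i^*}$ directly. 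Because rows only need to be covered in expectation, weighted by their optimal benefit, you avoid the union bound over all $m$ rows that produces the paper's logarithmic term: your threshold $t\ge\frac{(1+\alpha)n}{\alpha k}\ln\frac{1+\alpha}{\alpha}$ has no dependence on $m$ and no $\log\frac{n}{\alpha k}$ factor, and for constant $\alpha$ it is $O(n/k)$, which matches the lower bound of Proposition~\ref{tight} up to constants rather than up to log factors; your argument is therefore not only valid but quantitatively stronger than the one in the paper. What the paper's route buys in exchange is modularity: the covering-set formulation immediately yields the data-dependent corollary (a covering set of size $\alpha k$ in the actual data suffices, whatever $t$ is), which your argument does not produce, and Lemma~\ref{grow-lemma} is reused elsewhere. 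The only wording I would tighten is the claim that the probability that no random slot lands in $N_i^t$ is ``exactly'' $(1-qt/n)^k$: that expression is exact for the event that no slot both resamples and lands in $N_i^t$, which is the event you actually need, so the inequality $\prob(A\cap N_i^t=\emptyset)\le(1-qt/n)^k$ is what stands.
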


\begin{proof}[Proof Sketch]
For the value of $t$ chosen, there exists a set $\Gamma$ of size $\alpha k$ such that every element of $I$ has a neighbor in the $t$-nearest neighbor graph; this is proven using the probabilistic method. By appending $\Gamma$ to the optimal solution and using the monotonicity of $F$, we can move to the sparsified function, since no element of $I$ would prefer an element that was zeroed out in the sparsified matrix as one of their top $t$ most beneficial elements is present in the set $\Gamma$. The optimal solution appended with $\Gamma$ is a set of size $(1 + \alpha)k$. We then bound the amount that the optimal value of a submodular function can increase by when adding $\alpha k$ elements. See the appendix for the complete proof.
\end{proof}

Note that Theorem \ref{main-theorem} is agnostic to the algorithm used to optimize the sparsified function, and so if we use a $\rho$-approximation algorithm, then we are at most a factor of $\frac{\rho}{1 + \alpha}$ from the optimal solution. Later this section we will utilize this to design a subquadratic algorithm for optimizing facility location problems as long as we can quickly compute $t$-nearest neighbors and $k$ is large enough.

If $m =O(n)$ and $k = \Omega(n)$, we can achieve a constant factor approximation even when taking the sparsity parameter as low as $t = O(1)$, which means that the benefit matrix $C$ has only $O(n)$ nonzero entries. Also note that the only assumption we need is that the benefits between elements are nonnegative. When $k = \Omega(n)$, previous work was only able to take $t = O(\log n)$ and required the benefit matrix to come from a probability distribution \cite{wei2014fast}.

Our guarantee has two regimes depending on the value of $\alpha$. If we want the optimal solution to the sparsified function to be a $1 - \varepsilon$ factor from the optimal solution to the original function, we have that $t^*(\varepsilon) = O(\frac{n}{\varepsilon k} \log\frac{m}{\varepsilon k})$ suffices. Conversely, if we want to take the sparsity $t$ to be much smaller than $\frac{n}{k} \log \frac{m}{k}$, then this is equivalent to taking $\alpha$ very large and we have some guarantee of optimality.

In the proof of Theorem \ref{main-theorem}, the only time we utilize the value of $t$ is to show that there exists a small set $\Gamma$ that covers the entire set $I$ in the $t$-nearest neighbor graph. Real datasets often contain a covering set of size $\alpha k$ for $t$ much smaller than $O(\frac{n}{\alpha  k}\log\frac{m}{\alpha k})$. This observation yields the following corollary.

\begin{corollary}
If after sparsifying a problem instance there exists a covering set of size $\alpha k$ in the $t$-nearest neighbor graph, then the optimal solution $O_t$ of the sparsified problem satisfies $F(O_t) \geq \frac{1}{1 + \alpha} \mathrm{OPT}$.
\end{corollary}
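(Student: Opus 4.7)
The plan is to recycle the proof of Theorem \ref{main-theorem} almost verbatim: every step after the probabilistic construction of a covering set goes through unchanged, and by hypothesis we are already handed such a covering set $\Gamma$ of size $\alpha k$ in the $t$-nearest neighbor graph.

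First, I would fix an optimal size-$k$ set $O^*$ for the original (non-sparsified) problem, so that $F(O^*)=\mathrm{OPT}$, and form $A = O^* \cup \Gamma$, which has size at most $(1+\alpha)k$. Monotonicity of $F$ immediately gives $F(A) \geq F(O^*) = \mathrm{OPT}$.

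Second, I would show that sparsification does not affect the objective on $A$: writing $F_t$ for the objective after $t$-nearest neighbor sparsification and $T_i$ for the top-$t$ neighbor set of row $i$, the covering hypothesis gives $\Gamma \cap T_i \neq \emptyset$ for every $i$, so at least one element of $A$ still contributes its full original benefit to row $i$ under $F_t$; moreover any $v \in A$ achieving a strictly larger original benefit would itself have to lie in $T_i$ (otherwise its benefit would be no larger than the smallest top-$t$ value), hence also survives sparsification. Therefore $F_t(A) = F(A) \geq \mathrm{OPT}$.

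Third, since $|A| \leq (1+\alpha)k$, I would invoke the standard subadditivity inequality for nonnegative monotone submodular functions, partitioning $A$ into at most $1+\alpha$ chunks of size at most $k$ to obtain $F_t(A) \leq (1+\alpha)\, F_t(O_t)$. Combined with $F(O_t) \geq F_t(O_t)$ (sparsification only removes benefit), this yields $F(O_t) \geq \mathrm{OPT}/(1+\alpha)$, as claimed. The only mildly delicate point in the whole argument is the identity $F_t(A)=F(A)$ in the second step, which is handled by the case distinction above; every other step is either immediate from monotonicity/subadditivity or is a verbatim copy from the proof of Theorem \ref{main-theorem}.
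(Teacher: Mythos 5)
Your first two steps are exactly the paper's argument: append the covering set $\Gamma$ to an optimal $O^*$, use monotonicity, and observe that $F^{(t)}(O^*\cup\Gamma)=F(O^*\cup\Gamma)$ because every row has one of its top-$t$ columns inside $\Gamma$, so any column of $O^*\cup\Gamma$ that beats it must itself be a top-$t$ column and hence survives sparsification. The gap is in your third step. The ``standard subadditivity'' bound $F^{(t)}(A)\leq\sum_j F^{(t)}(A_j)$ over a partition of $A$ into chunks of size at most $k$ only yields a factor equal to the \emph{number of chunks}, which is $\lceil 1+\alpha\rceil$, not $1+\alpha$. For integer $\alpha$ this coincides with the claim, but the regime the paper cares most about is $\alpha=\varepsilon<1$ (to get a $\frac{1}{1+\varepsilon}\approx 1-\varepsilon$ guarantee), and there your partition has two chunks and delivers only $F(O_t)\geq\frac{1}{2}\mathrm{OPT}$ --- far weaker than $\frac{1}{1+\varepsilon}\mathrm{OPT}$. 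Subadditivity cannot produce a fractional ratio.

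The paper closes exactly this hole with Lemma \ref{grow-lemma}: for a normalized submodular $f$ and optimal solutions $O_2,O_1$ of their respective sizes, $f(O_2)\leq\frac{|O_2|}{|O_1|}f(O_1)$. Applied to $F^{(t)}$ with sizes $(1+\alpha)k$ and $k$ (after first passing from $O^*\cup\Gamma$ to the size-$(1+\alpha)k$ optimum $O_t^{(1+\alpha)k}$ of $F^{(t)}$), this gives the exact factor $1+\alpha$. Its proof is not subadditivity but an exchange-and-averaging argument (Lemma \ref{any-grow-lemma}): the best marginal gain of any outside element on top of an \emph{optimal} set $O$ is at most $\frac{1}{|O|}f(O)$, since otherwise swapping it for the worst element of $O$ would improve $O$. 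You would need to supply this lemma (or cite it from the proof of Theorem \ref{main-theorem}) to make your argument give the stated constant. Note also that you cannot apply the exchange lemma directly to $O^*\cup\Gamma$ under $F^{(t)}$, because $O^*$ is optimal for $F$ but not for $F^{(t)}$; the intermediate comparison to $O_t^{(1+\alpha)k}$ is what licenses it.
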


In the datasets we consider in our experiments of roughly 7000 items, we have covering sets with only $25$ elements for $t =  75$, and a covering set of size $10$ for $t = 150$. The size of covering set was upper bounded by using the greedy set cover algorithm. In Figure~\ref{dom-set} in the appendix, we see how the size of the covering set changes with the choice of the number of neighbors chosen $t$.

It would be desirable to take the sparsity parameter $t$ lower than the value dictated by $t^*(\alpha)$. As demonstrated by the following lower bound, is not possible to take the sparsity significantly lower than $\frac{1}{\alpha}\frac{n}{k}$ and still have a $\frac{1}{1 + \alpha}$ approximation in the worst case.

\begin{proposition}\label{tight}
Suppose we take
\[
t  = \max\left\{\frac{1}{2\alpha}, \frac{1}{1 + \alpha}\right\} \frac{n - 1}{k}.
\]
There exists a family of inputs such that we have $F(O_t) \leq \frac{1}{1 + \alpha - 2/k}\mathrm{OPT}$.
\end{proposition}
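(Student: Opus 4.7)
The plan is to construct a hub-and-spoke family of instances in which a single column $v_0$ is essential to the unsparsified optimum---giving nearly-maximum benefit to every row---while its benefit sits just below each row's top-$t$ threshold, so that sparsification deletes it everywhere and leaves a restricted max-$k$-cover on the remaining columns.

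Concretely I would take $m = (1+\alpha)k$, let $V = \{v_0\}\cup V'$ with $|V'| = n-1$, partition $V'$ into $g = (n-1)/t$ pairwise disjoint groups of size $t$, and assign each row to a group as evenly as possible. I set $C_{i,v_0} = 1-\epsilon$, $C_{i,v} = 1$ whenever $v$ lies in row $i$'s group, and $C_{i,v} = 0$ otherwise. The argument then has three ingredients: (i) every row's top-$t$ is exactly its group (tied at $1$, strictly above $v_0$'s $1-\epsilon$), so sparsification zeroes out $v_0$ on every row; (ii) the unsparsified optimum is $\{v_0\}\cup S$ with $S$ of size $k-1$ hitting as many distinct groups as possible, giving $F(\mathrm{OPT}) = (k-1) + (m-k+1)(1-\epsilon)$, which tends to $(1+\alpha)k$ as $\epsilon \to 0$ and dominates every $v_0$-free alternative; (iii) the sparsified problem reduces to max-$k$-cover over the groups, whose optimum covers $\min(k,g)\cdot m/g$ rows under the uniform assignment.

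The main obstacle is matching the prescribed $t = \max\{1/(2\alpha),1/(1+\alpha)\}(n-1)/k$ to the number of groups that actually fit inside $V'$, which bifurcates at $\alpha = 1$. For $\alpha \geq 1$ the maximum equals $1/(1+\alpha)$ and exactly $g = (1+\alpha)k > k$ groups of size $t$ tile $V'$; the sparsified cover is $k$ and the ratio is the clean $1/(1+\alpha) \leq 1/(1+\alpha - 2/k)$. For $\alpha < 1$ the maximum equals $1/(2\alpha)$ and only $g = 2\alpha k$ groups fit, yielding ratio $1/(2\alpha)$, which lies below $1/(1+\alpha - 2/k)$ exactly when $\alpha \geq 1 - 2/k$; the $-2/k$ slack in the proposition's denominator is precisely what is needed to bridge the two branches of the $\max$, absorbing both the integrality of the row-to-group assignment (since $(1+\alpha)/(2\alpha)$ need not be integral) and the residual $\epsilon(m-k+1)$ loss in $F(\mathrm{OPT})$ with $\epsilon$ chosen on the $1/k$ scale. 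For the remaining sub-range $\alpha < 1 - 2/k$, where the coarse partition is insufficient, I would replace it with a randomized construction---each top-$t$ set $T_i$ chosen as a uniformly random $t$-subset of $V'$---and apply Hoeffding plus a union bound over the $\binom{n-1}{k}$ potential covers to show that the max-$k$-cover concentrates to $m(1 - e^{-1/(2\alpha)})$, which lies below $1/(1+\alpha - 2/k)$ throughout the nontrivial regime $\alpha > 2/k$. Executing this two-regime bookkeeping carefully is the detailed but routine step that completes the proof.
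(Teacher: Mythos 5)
Your $\alpha \geq 1$ branch is sound and is essentially the paper's construction for that case: the paper uses the matrix $\Pi_n(t)$, whose first $n$ columns are cyclic windows of width $t$ (playing the role of your disjoint groups) and whose last column is the near-universal ``hub'' at value $1-1/(2n)$, hidden just below every row's top-$t$ threshold. Your disjoint-group version with hub value $1-\epsilon$ gives the same ratio $1/(1+\alpha)$ and absorbs the $\epsilon$ loss into the $2/k$ slack exactly as you describe.

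The $\alpha < 1$ branch, however, has a genuine gap, and the randomized patch cannot close it. In any \emph{pure coverage} instance---all entries in $\{0,1\}$ plus a hub at $1-\epsilon$---one has $\mathrm{OPT} \leq m$, while $F(O_t) \geq F^{(t)}(O_t)$ is at least the max-$k$-cover value of the sparsified graph. Since every row has exactly $t = \frac{1}{2\alpha}\frac{n-1}{k}$ neighbors, a uniformly random $k$-subset of columns covers each row with probability at least $1-(1-\tfrac{k}{n-1})^{t} \geq 1-e^{-1/(2\alpha)}$, so the \emph{best} $k$-cover always covers at least $m(1-e^{-1/(2\alpha)})$ rows. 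Hence the ratio is at least $1-e^{-1/(2\alpha)}$ in \emph{every} such construction, random or not, and $1-e^{-1/(2\alpha)} > \frac{1}{1+\alpha}$ once $\alpha \lesssim 0.4$ (e.g.\ at $\alpha=0.3$: $0.81$ versus $0.77$). So your concentration claim is false where it is needed, and no reassignment of rows to top-$t$ sets can rescue a coverage-only instance for small $\alpha$. The paper escapes this by abandoning pure coverage: for $\alpha<1$ it block-diagonally appends a scaled identity $\bigl(\frac{1}{\alpha}\frac{n}{k}-\frac{1}{2n}\bigr)I_{n\times n}$, i.e.\ a modular part whose per-column value is calibrated to sit just below the per-column coverage gain in the $\Pi_n$ block. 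The true optimum takes the hidden hub plus $k-1$ identity columns for value about $n(1+\frac{1}{\alpha})$, while the sparsified optimum is forced to spend $\alpha k$ columns covering the $\Pi_n$ block before touching the identity block, ending at about $\frac{n}{\alpha}$; the gap $\frac{1}{1+\alpha}$ comes from this opportunity cost, not from uncovered rows. That distractor block is the missing idea in your proposal.
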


The example we create to show this has the property that in the $t$-nearest neighbor graph, the set $\Gamma$ needs $\alpha k$ elements to cover every element of $I$. We plant a much smaller covering set that is very close in value to $\Gamma$ but is hidden after sparsification. We then embed a modular function within the facility location objective. With knowledge of the small covering set, an optimal solver can take advantage of this modular function, while the sparsified solution would prefer to first choose the set $\Gamma$ before considering the modular function. See the appendix for full details.

Sparsification integrates well with the stochastic greedy algorithm \cite{mirzasoleiman2015lazier}. By taking $t \geq t^*(\varepsilon /2)$ and running stochastic greedy with sample sets of size $\frac{n}{k}\ln\frac{2}{\varepsilon}$, we get a $1 - 1/e - \varepsilon$ approximation in expectation that runs in expected time $O(\frac{nm}{\varepsilon k}\log\frac{1}{\varepsilon}\log\frac{m}{\varepsilon k})$. If we can quickly sparsify the problem and $k$ is large enough, for example $n^{1/3}$, this is subquadratic. The following proposition shows a high probability guarantee on the runtime of this algorithm and is proven in the appendix.

\begin{proposition}\label{concentration}
When $m = O(n)$, the stochastic greedy algorithm \cite{mirzasoleiman2015lazier} with set sizes of size $\frac{n}{k}\log\frac{2}{\varepsilon}$, combined with sparsification with sparsity parameter $t$, will terminate in time $O(n \log \frac{1}{\varepsilon}\max\{t, \log n\})$ with high probability. When $t \geq t^*(\varepsilon / 2) = O(\frac{n}{\varepsilon k} \log\frac{m}{\varepsilon k})$, this algorithm has a $1 - 1/e - \varepsilon$ approximation in expectation.
\end{proposition}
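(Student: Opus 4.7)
My plan is to handle the approximation factor and the running time separately; the former is a short composition, while the latter requires a concentration argument.

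For the approximation, stochastic greedy with sample size $\frac{n}{k}\log\frac{2}{\varepsilon}$ gives a $(1 - 1/e - \varepsilon/2)$-approximation in expectation to the maximum of the sparsified objective \cite{mirzasoleiman2015lazier}. The remark after Theorem \ref{main-theorem} states that any $\rho$-approximation to the sparsified objective is a $\rho/(1+\alpha)$-approximation to $\mathrm{OPT}$; taking $\rho = 1 - 1/e - \varepsilon/2$ and $\alpha = \varepsilon/2$ (so that $t \geq t^*(\varepsilon/2)$ suffices) yields an expected ratio of $(1 - 1/e - \varepsilon/2)/(1 + \varepsilon/2)$, which a short algebraic check shows is at least $1 - 1/e - \varepsilon$.

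For the runtime, let $d_v = |\{i \in I : C_{iv} > 0 \text{ after sparsification}\}|$, so $\sum_v d_v = mt$. Maintaining a cache of each row's best similarity, the marginal gain of $v$ can be computed in $O(d_v)$ time, so the total running time is $T = \sum_{j=1}^{k}\sum_{v \in S_j} d_v$, where $S_j$ is the uniform size-$s$ sample at iteration $j$ with $s = \frac{n}{k}\log\frac{2}{\varepsilon}$. Summing expectations gives $\E[T] = ks \cdot mt/n = O(nt\log(1/\varepsilon))$ using $m = O(n)$. To upgrade to a high-probability statement I would apply Bernstein's inequality to $T$ written as a sum of $ks = n\log(2/\varepsilon)$ per-element costs (negative association handles the without-replacement sampling within each iteration, and across iterations the samples are independent), with the range bound $\max_v d_v \leq m = O(n)$ and the second-moment bound $\max_v d_v \cdot \bar d_v = O(nt)$, so the total variance is $O(n^2 t \log(1/\varepsilon))$. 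Setting the deviation $x$ to be a constant multiple of $n\log(1/\varepsilon)\max\{t,\log n\}$ and splitting into the regimes $t \geq \log n$ and $t < \log n$, in both cases the Gaussian term $x^2/\sigma^2$ and the linear term $x/M$ in Bernstein's bound evaluate to $\Omega(\log n \log(1/\varepsilon))$, yielding a failure probability of $n^{-\Omega(\log(1/\varepsilon))}$.

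The main obstacle I anticipate is the heavy-tailed degree sequence: individual $d_v$ can be as large as $\Theta(n)$, so a Hoeffding bound alone is too crude and Bernstein's second-moment term is essential. This is also precisely what drives the $\max\{t,\log n\}$ shape of the bound: when $t$ is small, the mean runtime $O(nt\log(1/\varepsilon))$ is cheap, but rare samples of very high-degree vertices prevent concentration at that scale until the envelope is raised to $O(n\log n \log(1/\varepsilon))$.
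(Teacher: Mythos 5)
Your proposal is correct and follows essentially the same route as the paper: the runtime bound is obtained by applying Bernstein's inequality to the sum of sparsified degrees of the sampled elements, using the range bound $d_v \leq m = O(n)$ and the variance bound $O(nt)$ per sample, with the same case split on $t \gtrless \log n$ that produces the $\max\{t,\log n\}$ envelope (the paper handles the without-replacement sampling via Hoeffding's 1963 reduction rather than negative association, but these are interchangeable here). Your explicit composition of the stochastic-greedy guarantee with the $1/(1+\alpha)$ loss from Theorem \ref{main-theorem} at $\alpha=\varepsilon/2$ is exactly the intended argument for the approximation claim, which the paper's proof leaves implicit.
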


\section{Guarantees for Threshold-Based Sparsification}
Rather than $t$-nearest neighbor sparsification, we now consider using $\tau$-threshold sparsification, where we zero-out all entries that have value below a threshold $\tau$. Recall the definition of a locality sensitive hash.

\begin{definition}
$\mathcal{H}$ is a $(\tau, K \tau, p, q)$-locality sensitive hash family if for $x, y$ satisfying $\simil(x,y) \geq \tau$ we have $\prob_{h \in \mathcal{H}}(h(x) = h(y)) \geq p$ and if $x, y$ satisfy $\simil(x,y) \leq K\tau$ we have $\prob_{h \in \mathcal{H}}(h(x) = h(y)) \leq q$.
\end{definition}

We see that $\tau$-threshold sparsification is a better model than $t$-nearest neighbors for LSH, as for $K = 1$ it is a noisy $\tau$-sparsification and for non-adversarial datasets it is a reasonable approximation of a $\tau$-sparsification method. Note that due to the approximation constant $K$, we do not have an \textit{a priori} guarantee on the runtime of arbitrary datasets. However we would expect in practice that we would only see a few elements with threshold above the value $\tau$. See \cite{andoni2012exact} for a discussion on this.

One issue is that we do not know how to choose the threshold $\tau$. We can sample elements of the benefit matrix $C$ to estimate how sparse the threshold graph will be for a given threshold $\tau$. Assuming the values of $C$ are in general position\footnote{By this we mean that the values of $C$ are all unique, or at least only a few elements take any particular value. We need this to hold since otherwise a threshold based sparsification may exclusively return an empty graph or the complete graph.}, by using the Dvoretzky-Kiefer-Wolfowitz-Massart Inequality \cite{dvoretzky1956asymptotic, massart1990tight} we can bound the number of samples needed to choose a threshold that achieves a desired sparsification level.

We establish the following data-dependent bound on the difference in the optimal solutions of the $\tau$-threshold sparsified function and the original function. We denote the set of vertices adjacent to $S$ in the $\tau$-threshold graph with $N(S)$. 

\begin{theorem}\label{threshold-theorem}
Let $O_\tau$ be the optimal solution to an instance of the facility location problem with a benefit matrix that was sparsified using a $\tau$-threshold. Assume there exists a set $S$ of size $k$ such that in the $\tau$-threshold graph we have the neighborhood  of $S$ satisfying $\vert N(S) \vert \geq \mu n$. Then we have
\[
F(O_\tau) \geq \left(1 + \frac{1}{\mu}\right)^{-1} \mathrm{OPT}.
\]
\end{theorem}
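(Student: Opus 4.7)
The plan is to mirror the strategy used for Theorem~\ref{main-theorem}, with the hypothesised set $S$ playing the role of the covering set $\Gamma$: augment the original optimum $O$ with $S$, relate $F_\tau(O \cup S)$ to $\mathrm{OPT}$, and then bound $F_\tau(O \cup S) \leq 2F_\tau(O_\tau)$ using subadditivity of the monotone submodular function $F_\tau$. The wrinkle is that $S$ only covers a $\mu$-fraction of the rows in the threshold graph, so the sparsification can destroy part of the optimum's contribution on the uncovered rows; the crux is to absorb that loss using a matching lower bound on $F_\tau(S)$ itself.

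Concretely, I would first compare $F_\tau(O \cup S)$ with $F(O)$ row by row. For any $i$ with $\max_{v \in O} C_{iv} \geq \tau$ the argmax survives thresholding, and for any $i \in N(S)$ some neighbor in $S$ contributes at least $\tau$. Together these two cases cover every row except those $i \notin N(S)$ with $\max_{v \in O} C_{iv} < \tau$, and on such rows the row loss is at most $\tau$. Summing over $I$ gives
\[
F_\tau(O \cup S) \;\geq\; \mathrm{OPT} - \tau\,|I \setminus N(S)| \;\geq\; \mathrm{OPT} - (1-\mu)\tau n.
\]
Separately, every $i \in N(S)$ has a neighbor in $S$ with $C^\tau_{iv} \geq \tau$, so $F_\tau(S) \geq \tau\,|N(S)| \geq \tau\mu n$; since $|S| = k$, optimality of $O_\tau$ yields $F_\tau(O_\tau) \geq \tau\mu n$, i.e.\ $(1-\mu)\tau n \leq \tfrac{1-\mu}{\mu} F_\tau(O_\tau)$. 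Combining with the subadditivity bound $F_\tau(O \cup S) \leq F_\tau(O) + F_\tau(S) \leq 2F_\tau(O_\tau)$ (both summands are at most $F_\tau(O_\tau)$ since $|O|=|S|=k$ and $O_\tau$ is optimal at size $k$), I obtain
\[
\mathrm{OPT} \;\leq\; F_\tau(O \cup S) + (1-\mu)\tau n \;\leq\; \left(2 + \frac{1-\mu}{\mu}\right) F_\tau(O_\tau) \;=\; \frac{1+\mu}{\mu} F_\tau(O_\tau),
\]
and the theorem follows from the elementary $F(O_\tau) \geq F_\tau(O_\tau)$.

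The main obstacle, compared to the $t$-nearest-neighbor setting of Theorem~\ref{main-theorem}, is precisely the partial-coverage term $\tau\,|I \setminus N(S)|$: appending $S$ no longer preserves the row contributions of $O$ automatically, as it did for $\Gamma$ in the earlier argument. The critical observation that makes the bound tight is that the same threshold $\tau$ appears multiplicatively in the lower bound $F_\tau(S)\geq \tau|N(S)|$, so the uncovered loss can be charged back to a $\tfrac{1-\mu}{\mu}$-fraction of $F_\tau(O_\tau)$ and, when added to the usual $2F_\tau(O_\tau)$ coming from subadditivity, reproduces exactly the factor $\tfrac{1+\mu}{\mu}=(1+1/\mu)$. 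Everything else is essentially the same toolkit (subadditivity and optimality) as in the proof of Theorem~\ref{main-theorem}.
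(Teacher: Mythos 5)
Your argument is correct and lands on the paper's exact constant, but it is packaged differently. The paper never forms $O \cup S$: it splits the matrix as $C = C^\tau + \overbar{C}^\tau$, writes $F(O) \leq F_\tau(O) + \overbar{F}_\tau(O)$, bounds the below-threshold part crudely by $n\tau$ over \emph{all} rows, converts that to $\frac{1}{\mu}F_\tau(S)$ via $F_\tau(S) \geq \tau\,\vert N(S)\vert \geq \tau\mu n$, and finishes with $F_\tau(O) + \frac{1}{\mu}F_\tau(S) \leq \left(1+\frac{1}{\mu}\right)F_\tau(O_\tau)$ in four lines. You instead mirror Theorem~\ref{main-theorem} by augmenting $O$ with $S$, which forces the subadditivity split $F_\tau(O\cup S)\leq F_\tau(O)+F_\tau(S)\leq 2F_\tau(O_\tau)$ and costs an extra additive $F_\tau(O_\tau)$; in exchange, your row-by-row case analysis pays the $\tau$ loss only on the $(1-\mu)n$ uncovered rows rather than all $n$, so the charge is $\frac{1-\mu}{\mu}F_\tau(O_\tau)$ instead of $\frac{1}{\mu}F_\tau(O_\tau)$. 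The two effects cancel exactly, $2 + \frac{1-\mu}{\mu} = 1 + \frac{1}{\mu}$, so neither route is tighter. The shared engine is identical: each below-threshold row loses at most $\tau$, and $F_\tau(S)\geq\tau\mu n$ together with the optimality of $O_\tau$ at size $k$ converts that loss into a multiple of $F_\tau(O_\tau)$; note also that both versions implicitly take $\vert I\vert = n$ when counting rows. The paper's version is shorter because the detour through $O\cup S$ --- which in Theorem~\ref{main-theorem} was needed to preserve the optimum's row contributions exactly --- buys nothing here once you are already willing to absorb a per-row loss of $\tau$.
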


For the datasets we consider in our experiments, we see that we can keep just a $0.01 - 0.001$ fraction of the elements of $C$ while still having a small set $S$ with a neighborhood $N(S)$ that satisfied $\vert N(S) \vert \geq 0.3 n$. In Figure~\ref{threshold-dom-set} in the appendix, we plot the relationship between the number of edges in the $\tau$-threshold graph and the number of coverable element by a a set of small size, as estimated by the greedy algorithm for max-cover.

Additionally, we have worst case dependency on the number of edges in the $\tau$-threshold graph and the approximation factor. The guarantees follow from applying Theorem \ref{threshold-theorem} with the following Lemma.

\begin{lemma}\label{threshold-worst-case}
For $k \geq  \frac{c}{1 - 2c^2}\frac{1}{\delta}$, any graph with $\frac{1}{2}\delta^2 n^2$ edges has a set $S$ of size $k$ such that the neighborhood $N(S)$ satisfies
\[
\vert N(S) \vert \geq  c \delta n.
\]
\end{lemma}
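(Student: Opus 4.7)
My plan is to prove the lemma constructively via a greedy argument. Define the greedy sequence $S_0 = \emptyset$ and $S_{j+1} = S_j \cup \{v_{j+1}\}$, where $v_{j+1} = \argmax_{v} |N(v) \setminus N(S_j)|$. I will show $|N(S_k)| \geq c\delta n$ by contradiction: assume $|N(S_k)| < c\delta n$. Monotonicity of $|N(S_j)|$ in $j$ then forces $|N(S_j)| < c\delta n$ for all $j \leq k$; in particular, $|N(S_1)| = d_{\max}$ (the greedy first picks a maximum-degree vertex), so the maximum degree satisfies $d_{\max} < c\delta n$ as well.

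The core of the argument is an averaging lower bound on the marginal gain $g_{j+1} = |N(S_{j+1})| - |N(S_j)|$. By double counting,
\[
g_{j+1} \;=\; \max_v|N(v) \setminus N(S_j)| \;\geq\; \frac{1}{n}\sum_{v \in V}|N(v) \setminus N(S_j)| \;=\; \frac{1}{n}\sum_{u \notin N(S_j)} d_u,
\]
since each $u \notin N(S_j)$ contributes $d_u$ to the sum on the left (by the symmetry $u \in N(v) \iff v \in N(u)$). Using $\sum_u d_u = 2|E| = \delta^2 n^2$ together with the product bound $\sum_{u \in N(S_j)} d_u \leq |N(S_j)| \cdot d_{\max} < c^2\delta^2 n^2$, this yields $g_{j+1} > (1-c^2)\delta^2 n$ at every step. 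Summing over the $k$ iterations gives $|N(S_k)| > k(1-c^2)\delta^2 n$. Since $1-c^2 \geq 1-2c^2$, the hypothesis $k \geq \tfrac{c}{(1-2c^2)\delta}$ yields $k(1-c^2)\delta \geq k(1-2c^2)\delta \geq c$, so $|N(S_k)| > c\delta n$, contradicting the assumption.

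The only real work is in the product bound: one must exploit \emph{both} consequences of greedy failure simultaneously, namely $|N(S_j)| < c\delta n$ (from monotonicity of the failure) and $d_{\max} < c\delta n$ (from the $j=1$ case), in order to obtain $\sum_{u \in N(S_j)} d_u \leq |N(S_j)|\,d_{\max} < c^2\delta^2 n^2$. This is what produces the crucial $1-c^2$ factor in the per-step gain. Incidentally, the argument actually gives the slightly stronger sufficient condition $k \geq \tfrac{c}{(1-c^2)\delta}$, and the lemma's $\tfrac{c}{(1-2c^2)\delta}$ is a cleaner-looking upper bound on this quantity.
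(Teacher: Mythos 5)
Your proof is correct and is essentially the paper's own argument: both greedily select the vertex of maximum marginal coverage and lower-bound each gain by averaging, after bounding the degree mass already absorbed by $\vert N(S_j)\vert\, d_{\max} < c^2\delta^2 n^2$. Your double-counting bookkeeping is slightly tighter (per-step gain $(1-c^2)\delta^2 n$ versus the paper's $(1-2c^2)\delta^2 n$ from counting residual edges), but this only improves the constant and the argument is otherwise the same.
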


To get a matching lower bound, consider the case where the graph has two disjoint cliques, one of size $\delta n$ and one of size $(1 - \delta) n$. Details are in the appendix.

\section{Experiments}

\subsection{Summarizing Movies and Music from Ratings Data}

We consider the problem of summarizing a large collection of movies. We first need to create a feature vector for each movie. Movies can be categorized by the people who like them, and so we create our feature vectors from the MovieLens ratings data \cite{movielens}. The MovieLens database has 20 million ratings for 27,000 movies from 138,000 users. To do this, we perform low-rank matrix completion and factorization on the ratings matrix \cite{jain2013low, koren2009matrix} to get a matrix $X = U V^T$, where $X$ is the completed ratings matrix, $U$ is a matrix of feature vectors for each user and $V$ is a matrix of feature vectors for each movie. For movies $i$ and $j$ with vectors $v_i$ and $v_j$, we set the benefit function $C_{ij} = v_i^T v_j$. We do not use the normalized dot product (cosine similarity) because we want our summary movies to be movies that were highly rated, and not normalizing makes highly rated movies increase the objective function more.

We complete the ratings matrix using the MLlib library in Apache Spark \cite{meng2015mllib} after removing all but the top seven thousand most rated movies to remove noise from the data. We use locality sensitive hashing to perform sparsification; in particular we use the LSH in the FALCONN library for cosine similarity \cite{andoni2015practical} and the reduction from a cosine simiarlity hash to a dot product hash \cite{neyshabur2015symmetric}. As a baseline we consider sparsification using a scan over the entire dataset, the stochastic greedy algorithm with lazy evaluations \cite{mirzasoleiman2015lazier}, and the greedy algorithm with lazy evaluations \cite{minoux1978accelerated}. The number of elements chosen was set to 40 and for the LSH method and stochastic greedy we average over five trials.

We then do a scan over the sparsity parameter $t$ for the sparsification methods and a scan over the number of samples drawn each iteration for the stochastic greedy algorithm. The sparsified methods use the (non-stochastic) lazy greedy algorithm as the base optimization algorithm, which we found worked best for this particular problem\footnote{When experimenting on very larger datasets, we found that runtime constraints can make it necessary to use stochastic greedy as the base optimization algorithm}. In Figure~\ref{movie-lens}(a) we see that the LSH method very quickly approaches the greedy solution---it is almost identical in value just after a few seconds even though the value of $t$ is much less than $t^*(\varepsilon)$. The stochastic greedy method requires much more time to get the same function value. Lazy greedy is not plotted, since it took over 500 seconds to finish.

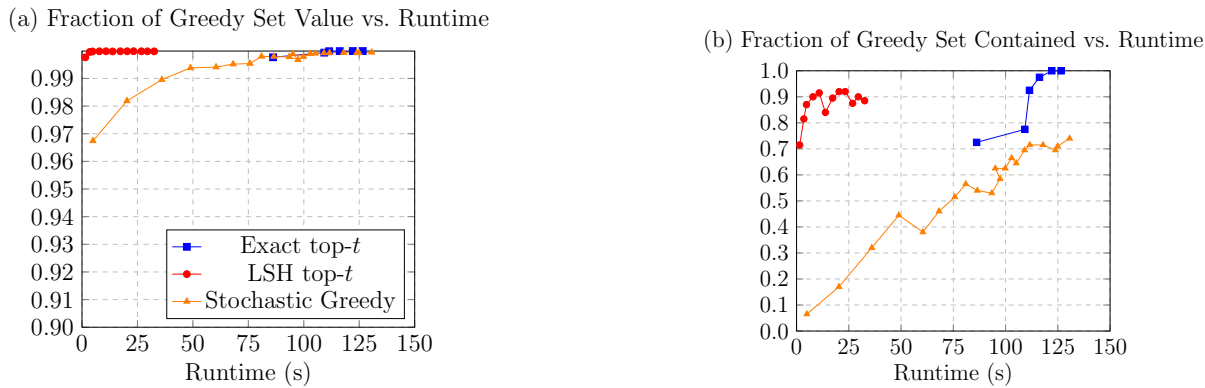
\begin{figure}[h]
\begin{large}

\centering
\begin{adjustbox}{max width=0.4\textwidth}
\begin{tikzpicture}
\begin{axis}[
    title={ (a) Fraction of Greedy Set Value vs. Runtime},
    xlabel={Runtime (s)},
    xmin=0, xmax=150,
    ymin=0.90, ymax=1.00,
    xtick={0,25,...,150},
    ytick={0.90,0.91,...,1.01},
    y tick label style={
        /pgf/number format/.cd,
        fixed,
        fixed zerofill,
        precision=2,
        /tikz/.cd
    },
    legend pos=south east,
    xmajorgrids=true,
    ymajorgrids=true,
    grid style=dashed,
]
 
\addplot[
    color=blue,
    mark=square*,
    ]
    coordinates {
(86.152864933, 0.9977550869216509)(109.238307953, 0.9994264786260714)(111.450765848, 0.9999394091188829)(116.223785877, 0.9999800379289574)(122.122308969, 0.99999999981716)(126.562099934, 0.99999999981716)
    };
    
\addplot[
    color=red,
    mark=*,
    ]
    coordinates {
(1.487619447708, 0.9977227444181606)(3.4899096012100004, 0.999678971501684)(4.74034438133, 0.9998472356376631)(7.855544090274, 0.9999078322937528)(10.837983226779999, 0.9999161218615367)(13.79812564852, 0.9998491259690704)(17.25913443566, 0.9999137431058571)(20.356715536139998, 0.9999129497305667)(23.273771476740002, 0.9999429317546169)(26.8021420002, 0.9999106804910383)(29.540540790580003, 0.9998984588052333)(32.550483274479994, 0.9998958273037776)    };
    
\addplot[
    color=orange,
    mark=triangle*,
    ]
    coordinates {
    (4.912817335128, 0.9674585757253977)(20.270766687360002, 0.9819560253543084)(35.9858564377, 0.9896363319763205)(48.92798762322, 0.9938938307866653)(60.45316638944, 0.9941927090179471)(68.15430922508001, 0.9952639199320946)(75.71731524468001, 0.9954702835908805)(80.94669675826, 0.9980723368387576)(86.49711732862002, 0.99802152783584)(93.414956522, 0.9978947467786904)(97.40989046096, 0.9968588123355459)(95.07798600198001, 0.9989675497141963)(99.86183280936, 0.9979948635243712)(102.88225874899999, 0.9990222318365574)(105.12995338439998, 0.9991405876054804)(109.09662442220001, 0.9992608597516404)(111.6092731002, 0.9993659234984779)(117.95164160719999, 0.9994036195721625)(123.86799402239998, 0.9993075855231858)(124.907637596, 0.99936205048486)(130.6230869292, 0.9995409959830086)
    };

    \legend{Exact top-$t$, LSH top-$t$, Stochastic Greedy}

\end{axis}
\end{tikzpicture}
\end{adjustbox}
\qquad \qquad \quad
\begin{adjustbox}{max width=0.415\textwidth}
\begin{tikzpicture}
\begin{axis}[
    title={ (b) Fraction of Greedy Set Contained vs. Runtime},
    xlabel={Runtime (s)},
    xmin=0, xmax=150,
    ymin=0, ymax=1.0,
    xtick={0,25,...,200},
    ytick={0.0,0.1,...,1.1},
    y tick label style={
        /pgf/number format/.cd,
        fixed,
        fixed zerofill,
        precision=1,
        /tikz/.cd
    },
    legend pos=south east,
    xmajorgrids=true,
    ymajorgrids=true,
    grid style=dashed,
]
 
\addplot[
    color=blue,
    mark=square*,
    ]
    coordinates {
(86.152864933, 0.725)(109.238307953, 0.775)(111.450765848, 0.925)(116.223785877, 0.975)(122.122308969, 1.0)(126.562099934, 1.0)
    };
    
\addplot[
    color=red,
    mark=*,
    ]
    coordinates {
(1.487619447708, 0.715)(3.4899096012100004, 0.815)(4.74034438133, 0.87)(7.855544090274, 0.9)(10.837983226779999, 0.915)(13.79812564852, 0.84)(17.25913443566, 0.895)(20.356715536139998, 0.92)(23.273771476740002, 0.92)(26.8021420002, 0.875)(29.540540790580003, 0.9)(32.550483274479994, 0.885)
    };
    
\addplot[
    color=orange,
    mark=triangle*,
    ]
    coordinates {
    (4.912817335128, 0.065)(20.270766687360002, 0.17)(35.9858564377, 0.32)(48.92798762322, 0.445)(60.45316638944, 0.38)(68.15430922508001, 0.46)(75.71731524468001, 0.515)(80.94669675826, 0.565)(86.49711732862002, 0.54)(93.414956522, 0.53)(97.40989046096, 0.585)(95.07798600198001, 0.625)(99.86183280936, 0.625)(102.88225874899999, 0.665)(105.12995338439998, 0.645)(109.09662442220001, 0.695)(111.6092731002, 0.715)(117.95164160719999, 0.715)(123.86799402239998, 0.695)(124.907637596, 0.71)(130.6230869292, 0.74)
    };

\end{axis}
\end{tikzpicture}
\end{adjustbox}
\end{large}
\caption{\footnotesize Results for the MovieLens dataset \cite{movielens}. Figure (a) shows the function value as the runtime increases, normalized by the value the greedy algorithm obtained. As can be seen our algorithm is within 99.9\% of greedy in less than 5 seconds. For this experiment, the greedy algorithm had a runtime of 512 seconds, so this is a 100x speed up for a small penalty in performance. We also compare to the stochastic greedy algorithm \cite{mirzasoleiman2015lazier}, which needs 125 seconds to get equivalent performance, which is 25x slower.
\newline
Figure (b) shows the fraction of the set that was returned by each method that was common with the set returned by greedy. We see that the approximate nearest neighbor method has 90\% of its elements common with the greedy set while being 50x faster than greedy, and using exact nearest neighbors can perfectly match the greedy set while being 4x faster than greedy.}
\label{movie-lens}
\end{figure}

A performance metric that can be better than the objective value is the fraction of elements returned that are common with the greedy algorithm. We treat this as a proxy for the interpretability of the results. We believe this metric is reasonable since we found the subset returned by the greedy algorithm to be quite interpretable. We plot this metric against runtime in Figure~\ref{movie-lens}(b). We see that the LSH method quickly gets to 90\% of the elements in the greedy set while stochastic greedy takes much longer to get to just 70\% of the elements. The exact sparsification method is able to completely match the greedy solution at this point. One interesting feature is that the LSH method does not go much higher than 90\%. This may be due to the increased inaccuracy when looking at elements with smaller dot products. We plot this metric against the number of exact and approximate nearest neighbors $t$ in Figure~\ref{t-scan} in the appendix.

We include a subset of the summarization and for each representative a few elements who are represented by this representative with the largest dot product in Table \ref{movie-table} to show the interpretability of our results.

\begin{table}
\caption{\footnotesize A subset of the summarization outputted by our algorithm on the MovieLens dataset, plus the elements who are represented by each representative with the largest dot product. Each group has a natural interpretation: 90's slapstick comedies, 80's horror, cult classics, etc. Note that this was obtained with only a similarity matrix obtained from ratings.}
\begin{scriptsize}
\begin{adjustbox}{center}
\begin{tabular}{llll}
\toprule
Happy Gilmore  & Nightmare on Elm Street  &  Star Wars IV   & Shawshank Redemption  \\
 \cmidrule(lr){1-1}   \cmidrule(lr){2-2}   \cmidrule(lr){3-3} \cmidrule(lr){4-4}
Tommy Boy & Friday the 13th  & Star Wars V   & Schindler's List  \\
Billy Madison & Halloween II & Raiders of the Lost Ark & The Usual Suspects  \\
Dumb \& Dumber  & Nightmare on Elm Street 3 & Star Wars VI & Life Is Beautiful  \\
Ace Ventura Pet Detective  & Child's Play & Indiana Jones, Last Crusade  & Saving Private Ryan\\
Road Trip & Return of the Living Dead II  & Terminator 2  & American History X  \\
American Pie 2 & Friday the 13th 2 & The Terminator & The Dark Knight\\
Black Sheep & Puppet Master & Star Trek II & Good Will Hunting\\
\bottomrule
\end{tabular}
\end{adjustbox}
\\ \\ \\

\begin{adjustbox}{center}
\begin{tabular}{llllll}
\toprule
Pulp Fiction & The Notebook & Pride and Prejudice & The Godfather \\
 \cmidrule(lr){1-1}   \cmidrule(lr){2-2}   \cmidrule(lr){3-3} \cmidrule(lr){4-4}
Reservoir Dogs & P.S. I Love You & Anne of Green Gables & The Godfather II \\
American Beauty   & The Holiday  & Persuasion & One Flew Over the Cuckoo's Nest  \\
A Clockwork Orange  & Remember Me & Emma & Goodfellas \\
 Trainspotting  & A Walk to Remembe & Mostly Martha & Apocalypse Now\\
Memento  & The Proposal & Desk Set & Chinatown\\
Old Boy & The Vow & The Young Victoria & 12 Angry Men \\
No Country for Old Men & Life as We Know It & Mansfield Park & Taxi Driver \\
\bottomrule
\end{tabular}
\end{adjustbox}
\end{scriptsize}
\label{movie-table}
\end{table}

\subsection{Finding Influential Actors and Actresses}
For our second experiment, we consider how to find a diverse subset of actors and actresses in a collaboration network. We have an edge between an actor or actress if they collaborated in a movie together, weighted by the number of collaborations. Data was obtained  from \cite{imdb} and an actor or actress was only included if he or she was one of the top six in the cast billing. As a measure of influence, we use personalized PageRank \cite{page1999pagerank}. To quickly calculate the people with the largest influence relative to someone, we used the random walk method\cite{avrachenkov2007monte}.

We first consider a small instance where we can see how well the sparsified approach works. We build a graph based on the cast in the top thousand most rated movies. This graph has roughly 6000 vertices and 60,000 edges.  We then calculate the entire PPR matrix using the power method. Note that this is infeasible on larger graphs in terms of time and memory. Even on this moderate sized graph it took six hours and takes 2GB of space. We then compare the value of the greedy algorithm using the entire PPR matrix with the sparsified algorithm using the matrix approximated by Monte Carlo sampling using the two metrics mentioned in the previous section. We omit exact nearest neighbor and stochastic greedy because it is not clear how it would work without having to compute the entire PPR matrix. Instead we compare to an approach where we choose a sample from $I$ and calculate the PPR only on these elements using the power method.  As mentioned in Section~\ref{related}, several algorithms utilize random sampling from $I$. We take $k$ to be 50 for this instance. In Figure~\ref{ppr-plot} in the appendix we see that sparsification performs drastically better in both function value and percent of the greedy set contained for a given runtime.

We now scale up to a larger graph by taking the actors and actresses billed in the top six for the twenty thousand most rated movies. This graph has 57,000 vertices and 400,000 edges. We would not be able to compute the entire PPR matrix for this graph in a reasonable amount of time or space. However we can run the sparsified algorithm in three hours using only 2 GB of memory, which could be improved further by parallelizing the Monte Carlo approximation.

We run the greedy algorithm separately on the actors and actresses. For each we take the top twenty-five and compare to the actors and actresses with the largest (non-personalized) PageRank. In Figure~\ref{actor-list} of the appendix, we see that the PageRank output fails to capture the diversity in nationality of the dataset, while the facility location optimization returns actors and actresses from many of the worlds film industries.

\section*{Acknowledgements}

This material is based upon work supported by the National Science Foundation Graduate Research Fellowship under Grant No. DGE-1110007 as well as NSF Grants CCF 1344179, 1344364, 1407278, 1422549 and ARO YIP W911NF-14-1-0258.

\bibliographystyle{plain}
\bibliography{research}

\section{Appendix: Additional Figures}

\begin{algorithm}[H]
\begin{algorithmic}
\State Input: benefit matrix $C$, sparsity graph $G = (V, I, E)$
\State define $N(v)$: return the neighbors of $v$ in $G$
\ForAll{$i \in I$:}
	\State \# cache of the current benefit given to $i$
	\State $\beta_i \gets 0$
\EndFor
\State $A \gets \emptyset$
\For{$k$ iterations:}
	\ForAll{$v \in V$:}
		\State \# calculate the gain of element $v$
		\State $g_v \gets 0$
		\ForAll{$i \in N(v)$:}
			\State \# add the gain of element $v$ from $i$
			\State $g_v \gets g_v + \max(C_{iv} - \beta_i, 0)$
		\EndFor
	\EndFor
	\State $v^* \gets \argmax_{V} g_v$
	\State $A \gets A \cup \{v^*\}$
	\ForAll{$i \in N(v^*)$}
		\State \# update the cache of the current benefit for $i$
		\State $\beta_i \gets \max (\beta_i, C_{iv^*})$
	\EndFor
\EndFor
\State return $A$
\end{algorithmic}
\caption{Greedy algorithm with sparsity graph}
\label{algo}
\end{algorithm}

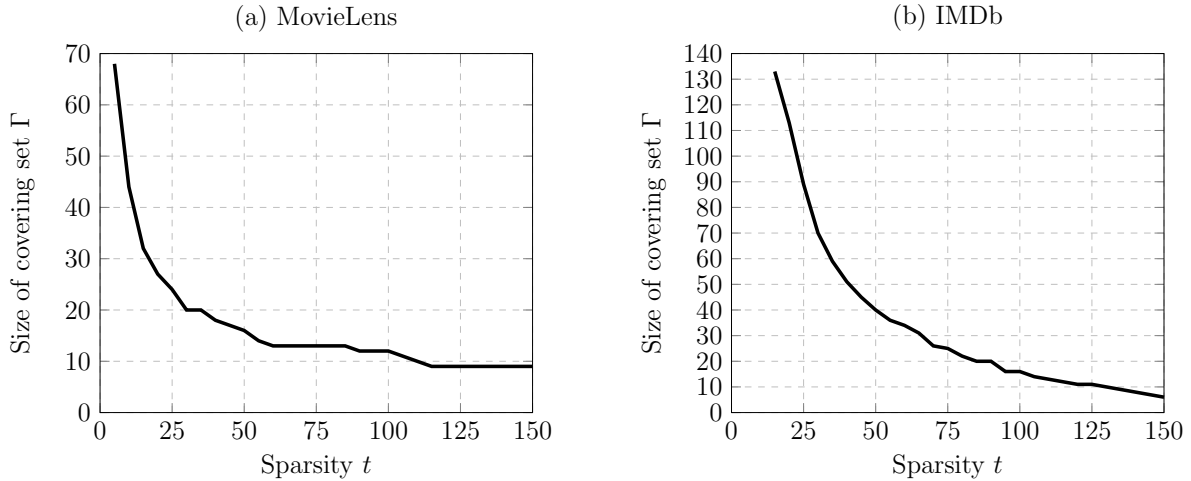
\begin{figure}[H]
\centering
\begin{adjustbox}{max width=0.45\textwidth}
\begin{tikzpicture}
\pgfkeys{%
    /pgf/number format/set thousands separator = {}}
\begin{axis}[
    title={ (a) MovieLens},
    xlabel={Sparsity $t$},
    ylabel={Size of covering set $\Gamma$},
    xmin=0, xmax=150,
    ymin=0, ymax=70,
    xtick={0,25, ..., 150},
    ytick={0, 10, ..., 70},
    y tick label style={
        /pgf/number format/.cd,
        fixed,
        fixed zerofill,
        precision=0,
        /tikz/.cd
    },
    legend pos=south east,
    xmajorgrids=true,
    ymajorgrids=true,
    grid style=dashed,
]
     
\addplot[
    color=black,
    mark=none,
    ultra thick,
    ]
    coordinates {
(5, 68)(10, 44)(15, 32)(20, 27)(25, 24)(30, 20)(35, 20)(40, 18)(45, 17)(50, 16)(55, 14)(60, 13)(65, 13)(70, 13)(75, 13)(80, 13)(85, 13)(90, 12)(95, 12)(100, 12)(105, 11)(110, 10)(115, 9)(120, 9)(125, 9)(130, 9)(135, 9)(140, 9)(145, 9)(150, 9)};

\end{axis}
\end{tikzpicture}
\end{adjustbox}
\qquad
\begin{adjustbox}{max width=0.45\textwidth}
\begin{tikzpicture}
\pgfkeys{%
    /pgf/number format/set thousands separator = {}}
\begin{axis}[
    title={ (b) IMDb},
    xlabel={Sparsity $t$},
    ylabel={Size of covering set $\Gamma$},
    xmin=0, xmax=150,
    ymin=0, ymax=140,
    xtick={0,25, ..., 150},
    ytick={0, 10, ..., 140},
    y tick label style={
        /pgf/number format/.cd,
        fixed,
        fixed zerofill,
        precision=0,
        /tikz/.cd
    },
    legend pos=south east,
    xmajorgrids=true,
    ymajorgrids=true,
    grid style=dashed,
]
     
\addplot[
    color=black,
    mark=none,
    ultra thick,
    ]
    coordinates {(15, 133)(20, 113)(25, 89)(30, 70)(35, 59)(40, 51)(45, 45)(50, 40)(55, 36)(60, 34)(65, 31)(70, 26)(75, 25)(80, 22)(85, 20)(90, 20)(95, 16)(100, 16)(105, 14)(115, 12)(120, 11)(125, 11)(130, 10)(135, 9)(140, 8)(145, 7)(150, 6)};

\end{axis}
\end{tikzpicture}
\end{adjustbox}

\caption{(a) MovieLens Dataset \cite{movielens} and (b) IMDb Dataset  \cite{imdb} as explained in the Experiments Section. We see that for sparsity $t$ significantly smaller than the $n / (\alpha k)$ lower bound we can still find a small covering set in the $t$-nearest neighbor graph.}
\label{dom-set}
\end{figure}

\begin{figure}[H]
\centering
\begin{adjustbox}{max width=0.45\textwidth}
\begin{tikzpicture}
\pgfkeys{%
    /pgf/number format/set thousands separator = {}}
\begin{axis}[
    title={ (a) MovieLens},
    xmode=log,
    log ticks with fixed point,
    xlabel={Fraction of elements kept},
    ylabel={Fraction of elements coverable},
    xmin=0, xmax=1.0,
    ymin=0, ymax=1,
    xtick={0.0, 0.001, 0.01, 0.1, 1.0},
    ytick={0.0, 0.1, ..., 1.0},
    y tick label style={
        /pgf/number format/.cd,
        fixed,
        fixed zerofill,
        precision=1,
        /tikz/.cd
    },
    legend pos=south east,
    xmajorgrids=true,
    ymajorgrids=true,
    grid style=dashed,
]
     
\addplot[
    color=black,
    mark=none,
    ultra thick,
    ]
    coordinates {
    	(0.001035, 0.1249075)
    	(0.002219, 0.182773)
    	(0.004291, 0.24803)
    	(0.0079, 0.31596)
	(0.0232, 0.4356)
	(0.0365, 0.4950)
	(0.0546, 0.5466)
	(0.0778, 0.6008)
	(0.1062, 0.6455)
	(0.1397, 0.6967)
	(0.1777, 0.7401)
	(0.2199, 0.7800)
	(0.2659, 0.8136)

	};

\end{axis}
\end{tikzpicture}
\end{adjustbox}
\qquad
\begin{adjustbox}{max width=0.45\textwidth}
\begin{tikzpicture}
\pgfkeys{%
    /pgf/number format/set thousands separator = {}}
\begin{axis}[
    xmode=log,
    log ticks with fixed point,
    title={ (b) IMDb},
    xlabel={Fraction of elements kept},
    ylabel={Fraction of elements coverable},
    xmin=0.0001, xmax=0.05,
    ymin=0, ymax=1.0,
    xtick={0.0001, 0.001, 0.01, 0.1},
    ytick={0, 0.1, ..., 1.0},
    y tick label style={
        /pgf/number format/.cd,
        fixed,
        fixed zerofill,
        precision=1,
        /tikz/.cd
    },
    legend pos=south east,
    xmajorgrids=true,
    ymajorgrids=true,
    grid style=dashed,
]
     
\addplot[
    color=black,
    mark=none,
    ultra thick,
    ]
    coordinates {
    	(0.000330, 0.130735)
    	(0.0005048, 0.177109)
	(0.0008008, 0.265745)
	(0.0009998, 0.325604)
	(0.001346, 0.398454)
	(0.001568, 0.46505)
    	(0.001952, 0.538233)
	(0.002109, 0.562407)
    	(0.002927, 0.6872224962999507)
    	(0.0031088, 0.7196184838020062)
	(0.003347, 0.7579345502384476)
	(0.0036768,  0.807104094721263)
	(0.0041244, 0.872389409636573)
	(0.0048499, 0.9215589541193883)
	(0.0059812, 0.9687551389574083)
	(0.008127, 0.9889820753165598)
	(0.01306, 1.0)
    };

\end{axis}
\end{tikzpicture}
\end{adjustbox}

\caption{(a) MovieLens Dataset \cite{movielens} and (b) IMDb Dataset  \cite{imdb}, as explained in the Experiments Section. We see that even with several orders of magnitude fewer edges than the complete graph we still can find a small set that covers a large fraction of the dataset. For MovieLens this set was of size 40 and for IMDb this set was of size 50. The number of coverable was estimated by the greedy algorithm for the max-coverage problem.}
\label{threshold-dom-set}
\end{figure}
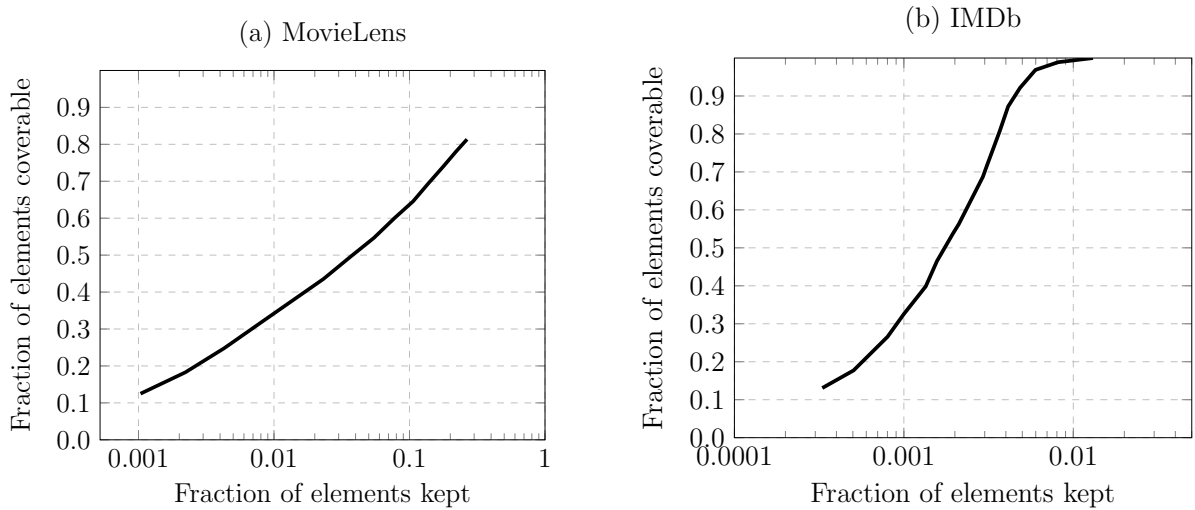

\begin{figure}[H]
\centering
\begin{adjustbox}{max width=0.5\textwidth}
\begin{tikzpicture}
\begin{axis}[
    title={Fraction of Greedy Set Contained vs. Sparsity},
    xlabel={Sparsity ($t$)},
    xmin=0, xmax=200,
    ymin=0.0, ymax=1.0,
    xtick={0,50,...,200},
    ytick={0.0,0.1,...,1.1},
    y tick label style={
        /pgf/number format/.cd,
        fixed,
        fixed zerofill,
        precision=1,
        /tikz/.cd
    },
    legend pos=south east,
    xmajorgrids=true,
    ymajorgrids=true,
    grid style=dashed,
]

\addplot[
    color=blue,
    mark=square*,
    ]
    coordinates {
(1, 0.725)(10, 0.775)(20, 0.825)(30, 0.9)(40, 0.925)(50, 0.925)(60, 0.925)(70, 0.975)(80, 0.975)(90, 0.975)(100, 0.975)(110, 0.975)(120, 0.975)(130, 0.975)(140, 1.0)(150, 1.0)(160, 1.0)(170, 1.0)(180, 1.0)(190, 1.0)(200, 1.0)  };

\addplot[
    color=red,
    mark=*,
    ]
    coordinates {
(1, 0.715)(10, 0.705)(20, 0.765)(30, 0.815)(40, 0.835)(50, 0.87)(60, 0.855)(70, 0.885)(80, 0.92)(90, 0.88)(100, 0.9)(110, 0.86)(120, 0.875)(130, 0.88)(140, 0.905)(150, 0.915)(160, 0.91)(170, 0.855)(180, 0.885)(190, 0.9)(200, 0.84)(210, 0.895)(220, 0.875)(230, 0.915)(240, 0.89)(250, 0.895)(260, 0.89)(270, 0.92)(280, 0.905)(290, 0.915)(300, 0.92)(310, 0.915)(320, 0.885)(330, 0.93)(340, 0.92)(350, 0.92)(360, 0.87)(370, 0.92)(380, 0.91)(390, 0.93)(400, 0.875)(410, 0.915)(420, 0.93)(430, 0.895)(440, 0.89)(450, 0.9)(460, 0.915)(470, 0.92)(480, 0.925)(490, 0.925)(500, 0.885)  };

    \legend{Exact top-$t$, LSH top-$t$, Stochastic Greedy}

\end{axis}
\end{tikzpicture}
\end{adjustbox}
\caption{The fraction of the greedy solution that was contained as the value of the sparsity $t$ was increased for exact nearest neighbor and approximate LSH-based nearest neighbor on the MovieLens dataset. We see that the exact method captures slightly more of greedy solution for a given value of $t$ and the LSH value does not converge to 1. However LSH still captures a reasonable amount of the greedy set and is significantly faster at finding nearest neighbors.}
\label{t-scan}
\end{figure}
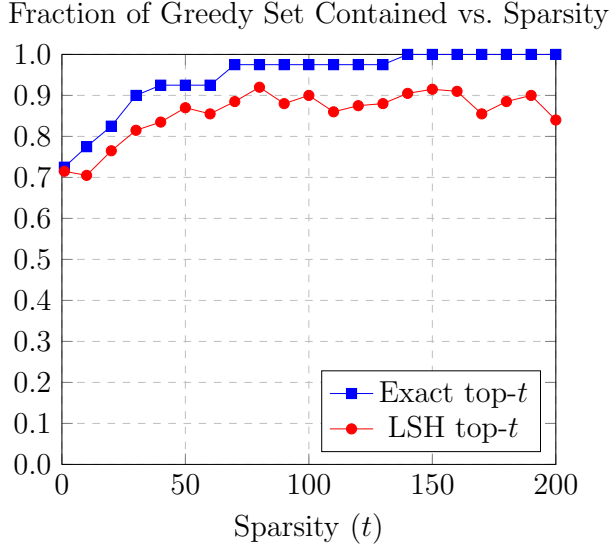

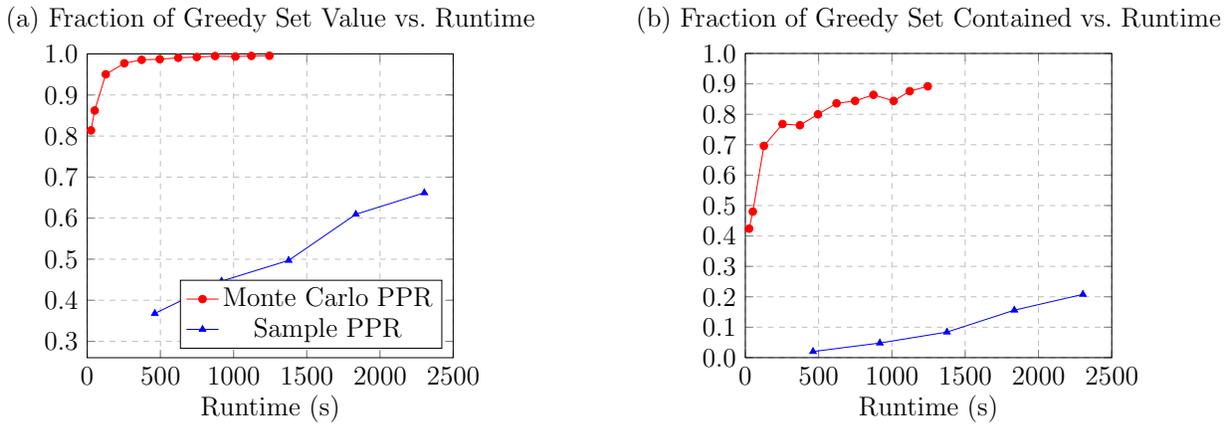
\begin{figure}[H]

\begin{large}

\centering
\begin{adjustbox}{max width=0.44\textwidth}
\begin{tikzpicture}
\pgfkeys{%
    /pgf/number format/set thousands separator = {}}
\begin{axis}[
    title={ (a) Fraction of Greedy Set Value vs. Runtime},
    xlabel={Runtime (s)},
    xmin=0, xmax=2500,
    ymin=0.26, ymax=1.0,
    xtick={0,500,...,2500},
    ytick={0.3,0.4,...,1.1},
    y tick label style={
        /pgf/number format/.cd,
        fixed,
        fixed zerofill,
        precision=1,
        /tikz/.cd
    },
    legend pos=south east,
    xmajorgrids=true,
    ymajorgrids=true,
    grid style=dashed,
]
     
\addplot[
    color=red,
    mark=*,
    ]
    coordinates {
(26.038707685460004, 0.8137029612727806)(51.225012779240004, 0.8621525748274685)(126.7847533702, 0.9503456575077602)(253.7109419346, 0.9772053409612215)(372.2783780096, 0.9856127135984707)(495.86493930819995, 0.9871571041871756)(622.2242150308, 0.9905524836773768)(748.6782937525999, 0.9923816988098747)(873.9022570136, 0.9948044415229502)(1012.2659533028, 0.9935841065545835)(1121.2680913460001, 0.9951465849389234)(1244.672116234, 0.9955398157028328)   };
    
\addplot[
    color=blue,
    mark=triangle*,
    ]
    coordinates {
(461.9811830044, 0.36783670498323284)(917.5943789957998, 0.4465346218801518)(1375.914388944, 0.4970589526439389)(1835.27218237, 0.6093796515102565)(2302.9939836, 0.661707205935919)
    };
   
    \legend{Monte Carlo PPR, Sample PPR}

\end{axis}
\end{tikzpicture}
\end{adjustbox}
\qquad
\begin{adjustbox}{max width=0.485\textwidth}
\begin{tikzpicture}
\pgfkeys{%
    /pgf/number format/set thousands separator = {}}
\begin{axis}[
    title={ (b) Fraction of Greedy Set Contained vs. Runtime},
    xlabel={Runtime (s)},
    xmin=0, xmax=2500,
    ymin=0, ymax=1.0,
    xtick={0,500,...,2500},
    ytick={0.0,0.1,...,1.1},
    y tick label style={
        /pgf/number format/.cd,
        fixed,
        fixed zerofill,
        precision=1,
        /tikz/.cd
    },
    legend pos=north east,
    xmajorgrids=true,
    ymajorgrids=true,
    grid style=dashed,
]
     
\addplot[
    color=red,
    mark=*,
    ]
    coordinates {
(26.038707685460004, 0.424)(51.225012779240004, 0.48)(126.7847533702, 0.696)(253.7109419346, 0.768)(372.2783780096, 0.764)(495.86493930819995, 0.8)(622.2242150308, 0.836)(748.6782937525999, 0.844)(873.9022570136, 0.864)(1012.2659533028, 0.844)(1121.2680913460001, 0.876)(1244.672116234, 0.892)
    };
    
\addplot[
    color=blue,
    mark=triangle*,
    ]
    coordinates {
(461.9811830044, 0.02)(917.5943789957998, 0.048)(1375.914388944, 0.084)(1835.27218237, 0.156)(2302.9939836, 0.208)
    };
 
\end{axis}
\end{tikzpicture}
\end{adjustbox}
\end{large}
\caption{Results for the IMDb dataset \cite{imdb}. Figure (a) shows the function value as the runtime increases, normalized by the value the greedy algorithm obtained. As can be seen our algorithm is within 99\% of greedy in less than 10 minutes. For this experiment, the greedy algorithm had a runtime of six hours, so this is a 36x acceleration for a small penalty in performance. We also compare to using a small sample of the set $I$ as an estimate of the function, which does not perform nearly as well as our algorithm even for much longer time.
\newline
Figure (b) shows the fraction of the set that was returned by each method that was common with the set returned by greedy. We see that the approximate nearest neighbor method has 90\% of its elements common with the greedy set while being 18x faster than greedy.}
\label{ppr-plot}
\end{figure}

\begin{table}
\caption{The top twenty-five actors and actresses generated by sparsified facility location optimization defined by the personalized PageRank of a 57,000 vertex movie personnel collaboration graph from \cite{imdb} and the twenty-five actors and actresses with the largest  (non-personalized) PageRank. We see that the classical PageRank approach fails to capture the diversity of nationality in the dataset, while the facility location results have actors and actresses from many of the worlds film industries.}
\centering
\begin{footnotesize}
\begin{tabular}{l l l l}
 \toprule
  \multicolumn{2}{c}{Actors} & \multicolumn{2}{c}{Actresses} \\
   \cmidrule(lr){1-2}   \cmidrule(lr){3-4}
\multicolumn{1}{c}{Facility Location} & \multicolumn{1}{c}{PageRank} & \multicolumn{1}{c}{Facility Location} & \multicolumn{1}{c}{PageRank} \\
 \cmidrule(lr){1-1}   \cmidrule(lr){2-2}   \cmidrule(lr){3-3} \cmidrule(lr){4-4}   
Robert De Niro &   Jackie Chan & Julianne Moore & Julianne Moore \\
 Jackie Chan &  G{\'e}rard Depardieu & Susan Sarandon & Susan Sarandon \\
 G{\'e}rard Depardieu &  Robert De Niro & Bette Davis & Juliette Binoche \\
 Kemal Sunal &  Michael Caine & Isabelle Huppert & Isabelle Huppert \\
 Shah Rukh Khan & Samuel L. Jackson & Kareena Kapoor & Catherine Deneuve \\
 Michael Caine & Christopher Lee & Juliette Binoche & Kristin Scott Thomas \\
 John Wayne & Donald Sutherland & Meryl Streep & Meryl Streep \\
 Samuel L. Jackson & Peter Cushing & Adile Na{\c s}it & Bette Davis   \\
 Bud Spencer & Nicolas Cage & Catherine Deneuve & Nicole Kidman \\
 Peter Cushing & John Wayne & Li Gong & Charlotte Rampling \\
 Toshir{\^o} Mifune & John Cusack & Helena Bonham Carter & Helena Bonham Carter \\
 Steven Seagal & Christopher Walken & Pen{\'e}lope Cruz & Kathy Bates \\
 Moritz Bleibtreu & Bruce Willis & Naomi Watts & Naomi Watts \\
 Jean-Claude Van Damme & Kemal Sunal & Masako Nozawa & Cate Blanchett \\
 Mads Mikkelsen & Harvey Keitel & Drew Barrymore & Drew Barrymore \\
 Michael Ironside & Amitabh Bachchan & Charlotte Rampling & Helen Mirren \\
 Amitabh Bachchan & Shah Rukh Khan & Golshifteh Farahani & Michelle Pfeiffer \\
 Ricardo Dar{\'i}n & Sean Bean & Hanna Schygulla & Pen{\'e}lope Cruz \\
 Charles Chaplin & Steven Seagal & Toni Collette & Sigourney Weaver \\
 Sean Bean & Jean-Claude Van Damme & Kati Outinen & Toni Collette \\
 Louis de Fun{\`e}s & Morgan Freeman & Edna Purviance & Catherine Keener \\
 Tadanobu Asano & Christian Slater & Monica Bellucci & Heather Graham \\
 Bogdan Diklic & Val Kilmer & Kristin Scott Thomas & Sandra Bullock \\
 Nassar & Liam Neeson & Catherine Keener & Kirsten Dunst \\
 Lance Henriksen & Gene Hackman &Ky{\^o}ko Kagawa & Miranda Richardson \\
 \bottomrule
\end{tabular}
\end{footnotesize}
\label{actor-list}
\end{table}

\section{Appendix: Full Proofs}

\subsection{Proof of Theorem \ref{main-theorem}}

We will use the following two lemmas in the proof of Theorem \ref{main-theorem}, which are proven later in this section. The first lemma bounds the size of the smallest set of left vertices covering every right vertex in a $t$-regular bipartite graph.

\begin{lemma}\label{key-lemma}
For any bipartite graph $G = (V, I, E)$ such that $\vert V \vert = n$, $\vert I \vert = m$, every vertex $i \in I$ has degree at least $t$, and $n \leq m t$, there exists a set of vertices $\Gamma \subseteq V$ such that every vertex in $I$ has a neighbor in $\Gamma$ and
\begin{equation}\label{key-lemma-eq}
\vert \Gamma \vert \leq \frac{n}{t}\left (1 + \ln \frac{mt}{n} \right).
\end{equation}
\end{lemma}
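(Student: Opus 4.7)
The plan is to prove this by the probabilistic method, in the classical two-stage style used for the dominating-set / set-cover bound $\ln n$ / $k$. The idea is that a random subset of $V$ already covers most of $I$, and we then patch up the few uncovered right vertices by adding one arbitrary neighbor per uncovered vertex.

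Concretely, first I would set $p = \frac{1}{t}\ln\frac{mt}{n}$, which is in $[0,1]$ in the nontrivial regime (since $n \leq mt$ gives $p \geq 0$; if $p > 1$ then $\frac{n}{t}(1+\ln\frac{mt}{n}) > n$ and we can simply take $\Gamma = V$, so assume $p \leq 1$). Sample a set $R \subseteq V$ by including each vertex independently with probability $p$. Then $\E|R| = pn$. For each $i \in I$, since $i$ has at least $t$ neighbors in $V$, the probability that none of its neighbors lies in $R$ is at most $(1-p)^t \leq e^{-pt}$. Let $U \subseteq I$ be the set of vertices of $I$ not covered by $R$; by linearity $\E|U| \leq m e^{-pt}$.

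Next, build $\Gamma$ in two stages: take $\Gamma = R \cup R'$, where $R'$ contains one arbitrarily chosen neighbor in $V$ for each $i \in U$ (such a neighbor exists since every $i \in I$ has degree $\geq t \geq 1$). By construction every $i \in I$ has a neighbor in $\Gamma$, and $|\Gamma| \leq |R| + |U|$. Taking expectations,
\[
\E|\Gamma| \;\leq\; pn + m e^{-pt} \;=\; \frac{n}{t}\ln\frac{mt}{n} + m\cdot\frac{n}{mt} \;=\; \frac{n}{t}\left(1 + \ln\frac{mt}{n}\right),
\]
where the plugged-in $p$ is exactly the minimizer of $pn + me^{-pt}$. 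By the probabilistic method, there exists at least one realization for which $|\Gamma|$ does not exceed its expectation, which is the bound \eqref{key-lemma-eq}.

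There is no real obstacle beyond choosing the right sampling probability and verifying the boundary case $p > 1$; the only subtlety is that the second-stage set $R'$ must be controlled in expectation (not just bounded deterministically by $m$), which is why we apply linearity of expectation to $|U|$ rather than using the trivial bound. Everything else is a one-line computation after substituting the optimal $p$.
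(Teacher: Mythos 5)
Your proof is correct and is essentially identical to the paper's: the paper also samples each vertex of $V$ independently with probability $p$, patches each uncovered $i \in I$ with one arbitrary neighbor, bounds the expectation by $np + me^{-pt}$, and optimizes to $p = \frac{1}{t}\ln\frac{mt}{n}$ (it is a slight modification of Theorem 1.2.2 in Alon--Spencer). Your handling of the boundary case $p > 1$ via the trivial bound $\vert\Gamma\vert \leq n$ matches the paper's remark as well.
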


The second lemma bounds the rate that the optimal solution grows as a function of $k$.

\begin{lemma}\label{grow-lemma}
Let $f$ be any normalized submodular function and let $O_2$ and $O_1$ be optimal solutions for their respective sizes, with $\vert O_2 \vert \geq \vert O_1 \vert$. We have
\[
f(O_2) \leq \frac{\vert O_2 \vert}{\vert O_1 \vert}f(O_1).
\]
\end{lemma}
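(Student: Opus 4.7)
My plan is an averaging argument: sample a uniformly random size-$|O_1|$ subset $T$ of $O_2$, bound $\E[f(T)]$ above by $f(O_1)$ using optimality of $O_1$, and bound it below by $(|O_1|/|O_2|)\, f(O_2)$ using submodularity. Writing $k_1 = |O_1|$ and $k_2 = |O_2|$, the upper bound is immediate: since $T$ is a size-$k_1$ subset of $V$ and $O_1$ is optimal at cardinality $k_1$, $f(T) \leq f(O_1)$ pointwise, so $\E[f(T)] \leq f(O_1)$. All of the work is in the matching lower bound.

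To get the lower bound, I would realize $T$ via a uniformly random permutation $\pi$ of $O_2$ and set $T_i = \{\pi(1),\dots,\pi(i)\}$, so $T_{k_1}$ has the desired distribution and $T_{k_2} = O_2$. Let $g_i = \E_\pi[f(T_i) - f(T_{i-1})]$ denote the expected $i$-th marginal. The key sub-claim is that $g_i$ is non-increasing in $i$, which I would prove by a swap argument: for any $\pi$, let $\pi'$ be the same permutation with positions $i$ and $i+1$ swapped. Submodularity applied to the chain $T_{i-1} \subseteq T_{i-1} \cup \{\pi(i)\}$ with added element $\pi(i+1)$ yields
\[
f(T_{i-1} \cup \{\pi(i),\pi(i+1)\}) - f(T_{i-1} \cup \{\pi(i)\}) \;\leq\; f(T_{i-1} \cup \{\pi(i+1)\}) - f(T_{i-1}),
\]
whose left-hand side is the $(i{+}1)$-th marginal under $\pi$ and whose right-hand side is the $i$-th marginal under $\pi'$. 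Since the map $\pi \mapsto \pi'$ is a measure-preserving involution on permutations of $O_2$, taking expectations over uniform $\pi$ gives $g_{i+1} \leq g_i$.

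Using normalization $f(\emptyset) = 0$, I telescope $\E[f(T_i)] = \sum_{j=1}^{i} g_j$; since $(g_j)$ is non-increasing, the partial averages $\E[f(T_i)]/i$ are non-increasing in $i$. Comparing $i = k_1$ with $i = k_2$ yields $\E[f(T_{k_1})]/k_1 \geq \E[f(T_{k_2})]/k_2 = f(O_2)/k_2$, which combined with $\E[f(T_{k_1})] \leq f(O_1)$ gives $f(O_2) \leq (k_2/k_1)\, f(O_1)$, as desired. The main obstacle is the swap argument establishing $g_{i+1} \leq g_i$: one must identify the correct chain on which to apply submodularity and verify that the transposition is a bijection on permutations, but the step itself is a one-line application of the diminishing-returns inequality. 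Everything else—the telescoping identity and the invocation of optimality of $O_1$—is routine once that monotonicity of marginals is in hand, and the argument uses only submodularity and normalization, never monotonicity of $f$.
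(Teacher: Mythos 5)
Your proof is correct, but it takes a genuinely different route from the paper's. The paper decomposes $O_2$ as $A^* \cup A'$, where $A^*$ is a best subset of $O_2$ of size $\vert O_1 \vert$, and then invokes an auxiliary result (Lemma~\ref{any-grow-lemma}) stating that for a set $O$ optimal for its size, $f(O \cup A) \leq (1 + \vert A \vert / \vert O \vert) f(O)$; that lemma is proved by an exchange argument comparing the largest marginal of an element of $A$ over $O$ against the smallest marginal of an element of $O$, using the optimality of $O$ to rule out a profitable swap. You instead use a random-subset averaging argument: a uniformly random $\vert O_1 \vert$-subset $T$ of $O_2$ satisfies $\E[f(T)] \leq f(O_1)$ by optimality of $O_1$, and $\E[f(T)] \geq (\vert O_1 \vert / \vert O_2 \vert) f(O_2)$ because the expected marginals along a uniformly random permutation of $O_2$ are non-increasing---your transposition map is indeed a measure-preserving involution, the submodularity inequality is applied to the correct chain, and running averages of a non-increasing sequence are non-increasing, so the telescoping goes through using only $f(\emptyset) = 0$. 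Both arguments avoid monotonicity of $f$. Your route has the structural advantage that the lower bound $\E[f(T)] \geq (\vert T \vert / \vert S \vert) f(S)$ holds for an \emph{arbitrary} set $S$, so only the optimality of $O_1$ is ever invoked, whereas the paper's argument leans on the optimality of $A^*$ within the restricted ground set $O_2$ at the corresponding step; the paper's route, in exchange, yields the reusable additive-form bound of Lemma~\ref{any-grow-lemma}. Both proofs are complete; yours is the more standard probabilistic argument and is somewhat shorter once the monotone-marginals fact is established.
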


We now prove Theorem 1.

\begin{proof}
We will take $t^*(\alpha)$ to be the smallest value of $t$ such that $\vert \Gamma \vert \geq \alpha k$ in Equation \eqref{key-lemma-eq}. It can be verified that $t^*(\alpha) \leq \lceil 4 \frac{n}{\alpha k} \max\{1, \ln \frac{n}{\alpha k}\}\rceil$.

Let $\Gamma \subseteq V$ be a set such that all elements of $I$ has a $t$-nearest neighbor in $\Gamma$. By Lemma \ref{key-lemma}, one is guaranteed to exists of size at most $\alpha k$ for $t \geq t^*(\alpha)$. Let $O$ be the optimal set of size $k$ and let $F^{(t)}$ be the objective function of the sparsified function. Let $O_t^k$ and $O_t^{(1 + \alpha) k}$ be the optimal solutions to $F^{(t)}$ of size $k$ and $(1 + \alpha)k$. We have
\begin{align}
F(O) &\leq F(O \cup \Gamma) \notag\\
&= F^{(t)}(O \cup \Gamma) \notag\\
&\leq F^{(t)}(O_t^{(1 + \alpha) k}) \label{main-theorem-eq}.
\end{align}
The first inequality is due to the monotonicity of $F$. The second is because every element of $I$ would prefer to choose one of their $t$ nearest neighbors and because of  $\Gamma$ they can. The third inequality is because $\vert O \cup \Gamma \vert \leq (1 + \alpha) k$ and $O_t^{(1 + \alpha) k}$ is the optimal solution for this size.

Now by Lemma \ref{grow-lemma}, we can bound the approximation for shrinking from $O_t^{(1 + \alpha) k}$ to $O_t^k$. Applying Lemma \ref{grow-lemma} and continuing from Equation \eqref{main-theorem-eq} implies
\[
F(O) \leq (1 + \alpha) F^{(t)}(O_t^k).
\]
Observe that $F^{(t)}(A) \leq F(A)$ for any set $A$ to obtain the final bound.
\end{proof}

\subsection{Proof of Proposition \ref{tight}}

Define $\Pi_n(t)$ to be the $n \times (n+1)$ matrix where for $i = 1, \ldots, n$ we have column $i$ equal to $1$ for positions $i$ to $i + t - 1$, potentially cycling the position back to the beginning if necessary, and then $0$ otherwise. For column $n+1$ make all values $1 - 1 / 2n$. For example,
\[
\Pi_6(3) = 
\begin{pmatrix}
1 & 0 & 0 & 0 & 1 & 1 & 11/12 \\
1 & 1 & 0 & 0 & 0 & 1 &11/12 \\
1 & 1 & 1 & 0 & 0 & 0 &11/12 \\
0 & 1 & 1 & 1 & 0 & 0 &11/12 \\
0 & 0 & 1 & 1 & 1 & 0 &11/12 \\
0 & 0 & 0 & 1 & 1 & 1 &11/12 \\
\end{pmatrix}.
\]

We will show the lower bound in two parts, when $\alpha < 1$ and when $\alpha \geq 1$.

\begin{proof}[Proof of case $\alpha \geq 1$]
Let $F$ be the facility location function defined on the benefit matrix $C = \Pi_n(\delta\frac{n}{k})$. For $t = \delta \frac{n}{k}$, the sparsified matrix $C^{(t)}$ has all of its elements except the $n+1$st row. With $k$ elements, the optimal solution to $F^{(t)}$ is to choose the $k$ elements that let us cover $\delta n$ of the elements of $I$, giving a value of $\delta n$. However if we chose the $n+1$th element, we would have gotten a value of $n-1/2$, giving an approximation of 
$\frac{\delta}{1 - 1/(2n)}$. Setting $\delta = 1/(1 + \alpha)$ and using  $\alpha \leq n/k$  implies
\[
F(O_t) \leq \frac{1}{1 + \alpha - 1/k} \mathrm{OPT}
\]
when we take $t = \frac{1}{1 + \alpha} \frac{\vert V \vert - 1}{k}$ (note that for this problem $\vert V \vert = n + 1$).
\end{proof}

\begin{proof}[Proof of case $\alpha < 1$.]
Let $F$ be the facility location function defined on the benefit matrix
\[
C =
\begin{pmatrix}
\Pi_n(\frac{1}{\alpha} \frac{n}{k}) & 0 \\
0 & \left(\frac{1}{\alpha}\frac{n}{k} - \frac{1}{2n}\right) I_{n \times n}
\end{pmatrix}
\]
For $t = \frac{1}{\alpha}\frac{n}{k}$, the optimal solution to $F^{(t)}$ is to use $\alpha k$ elements to cover all the elements of $\Pi_n$, then use the remaining $(1 - \alpha) k$ elements in the identity section of the matrix. This has a value of less than $\frac{1}{\alpha} n$. For $F$, the optimal solution is to choose the $n+1$st element of $\Pi_n$, then use the remaining $k-1$ elements in the identity section of the identity section of the matrix. This has a value of more than $n(1 + \frac{1}{\alpha} - \frac{1}{k \alpha} - \frac{1}{n})$, and therefore an approximation of $\frac{1}{1 + \alpha - 1/k - 1/n}$. Note that in this case $\vert V \vert = 2n +1$ and so we have
\[
F(O_t) \leq \frac{1}{1 + \alpha - 1/k - 1/n} \mathrm{OPT}
\]
when we take $t = \frac{1}{2 \alpha} \frac{\vert V \vert - 1/2}{k}$.
\end{proof}

\subsection{Proof of Proposition \ref{concentration}}

\begin{proof}

The stochastic greedy algorithm works by choosing a set of elements $S_j$ each iteration of size $\frac{n}{k}\log\frac{1}{\varepsilon}$. We will assume $m = n$ and $\varepsilon = 1/e$ to simplify notation. We want to show that
\[
\sum_{j = 1}^k \sum_{v \in S_j}d_v = O(nt)
\]
with high probability, where $d_v$ is the degree of element $v$ in the sparsity graph. We will show this using Bernstein's Inequality: given $n$ i.i.d. random variables $X_1, \ldots, X_n$ such that $\E(X_\ell) = 0$, $\Var(X_\ell) = \sigma^2$, and $\vert X_\ell \vert \leq c$ with probability 1, we have
\[
\prob\left(\sum_{\ell = 1}^n X_\ell \geq \lambda n\right) \leq \exp\left(-\frac{n\lambda^2}{2\sigma^2 + \frac{2}{3}c \lambda}\right).
\]

We will take $X_\ell$ to be the degree of the $\ell$th element of $V$ chosen uniformly at random, shifted by the mean of $t$. Although in the stochastic greedy algorithm the elements are not chosen i.i.d. but instead iterations in $k$ iterations of sampling without replacement, treating them as i.i.d. random variables for purposes of Bernstein's Inequality is justified by Theorem 4 of \cite{hoeffding1963probability}.

We have $\vert X_\ell \vert \leq n$, and $\Var(X_\ell) \leq tn$, where the variance bound is because variance for a given mean $t$ on support $[0, m]$ is maximized by putting mass $\frac{t}{n}$ on $n$ and $1 - \frac{t}{n}$ on $0$.

If $t \geq \ln n$, then take $\lambda = \frac{8}{3}t$. If $t < \ln n$, take $\lambda = \frac{8}{3} \ln n$. This yields
\[
\prob\left(\sum_{j = 1}^k \sum_{v \in S_j} d_v \geq nt + \frac{8}{3}\sqrt{\frac{m}{n}}\max\{nt, \ln n\}\right) \leq \frac{1}{n}.
\]
\end{proof}

\subsection{Proof of Lemma \ref{key-lemma}}

We now prove Lemma \ref{key-lemma}, which is a modification of Theorem 1.2.2 of \cite{alon2008probabilistic}.

\begin{proof}
Choose a set $X$ by picking each element of $V$ with probability $p$, where $p$ is to be decided later. For every element of $I$ without a neighbor in $X$, add one arbitrarily. Call this set $Y$. We have
$\E(\vert X \cup Y \vert) \leq np + m(1 - p)^t \leq np + me^{-pt}$. Optimizing for $p$ yields $p = \frac{1}{t}\ln\frac{mt}{n}$. This is a valid probability when $\frac{mt}{n} \geq 1$, which we assumed, and when $\frac{m}{n} \leq \frac{e^t}{t}$ (we do not need to worry about the latter case because if it does not hold then it implies an inequality weaker than the trivial one $\vert \Gamma \vert \leq n$).
\end{proof}

\subsection{Proof of Lemma \ref{grow-lemma}}

Before we prove Lemma \ref{grow-lemma}, we need the following Lemma.

\begin{lemma}\label{any-grow-lemma}
Let $f$ be any normalized submodular function and let $O$ be an optimal solution for its respective size. Let $A$ be any set. We have
\[
f(O \cup A) \leq \left(1 + \frac{\vert A \vert}{\vert O \vert} \right ) f(O).
\]
\end{lemma}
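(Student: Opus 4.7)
The plan is to bound $f(O \cup A) - f(O)$ by a sum of single-element marginals, and then bound each marginal using a swap argument against the optimality of $O$. Specifically, by submodularity applied to adding the elements of $A$ one at a time on top of $O$, we get
\[
f(O \cup A) - f(O) \leq \sum_{a \in A} \bigl[f(O \cup \{a\}) - f(O)\bigr].
\]
It then suffices to show that for every $a \in A$, $f(O \cup \{a\}) - f(O) \leq f(O)/|O|$.

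To bound an individual marginal, I would use a swap trick: for any $o \in O$, submodularity (diminishing returns, since $O \setminus \{o\} \subseteq O$) gives
\[
f(O \cup \{a\}) - f(O) \leq f((O \setminus \{o\}) \cup \{a\}) - f(O \setminus \{o\}).
\]
Since $(O \setminus \{o\}) \cup \{a\}$ has size at most $|O|$ and $O$ is optimal of size $|O|$, we have $f((O \setminus \{o\}) \cup \{a\}) \leq f(O)$, hence $f(O \cup \{a\}) - f(O) \leq f(O) - f(O \setminus \{o\})$. This holds for every $o \in O$, so in particular for the minimizing one.

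The final step is to show that $\min_{o \in O}\bigl[f(O) - f(O \setminus \{o\})\bigr] \leq f(O)/|O|$ by an averaging argument. Writing $O = \{o_1,\ldots,o_{|O|}\}$ and telescoping,
\[
f(O) = \sum_{i=1}^{|O|} \bigl[f(\{o_1,\ldots,o_i\}) - f(\{o_1,\ldots,o_{i-1}\})\bigr] \geq \sum_{i=1}^{|O|} \bigl[f(O) - f(O \setminus \{o_i\})\bigr],
\]
where the inequality uses submodularity with $\{o_1,\ldots,o_{i-1}\} \subseteq O \setminus \{o_i\}$ (and normalization for the base case). Therefore the average, and hence the minimum, of $f(O) - f(O \setminus \{o\})$ over $o \in O$ is at most $f(O)/|O|$. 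Combining this with the swap bound and summing over $a \in A$ yields $f(O \cup A) - f(O) \leq (|A|/|O|)\,f(O)$, which is the claimed inequality.

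The main subtlety is the last telescoping step — one has to be careful that the telescoping expansion of $f(O)$ dominates the sum of leave-one-out marginals via submodularity; everything else is a fairly standard swap/monotonicity manipulation that leverages only the optimality of $O$ at its own cardinality.
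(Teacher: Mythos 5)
Your proof is correct and follows essentially the same route as the paper's: decompose $f(O\cup A)-f(O)$ into the marginals $f(O\cup\{a\})-f(O)$ via submodularity, bound each such marginal by a leave-one-out marginal of $O$ using a swap against the optimality of $O$, and show the smallest leave-one-out marginal is at most $f(O)/\vert O\vert$ by telescoping plus submodularity. The only cosmetic difference is that you run the swap argument directly for every $o\in O$, whereas the paper phrases the same step as a contradiction for an extremal pair (the maximizing $a_1$ and the minimizing $o_k$).
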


We now prove Lemma \ref{grow-lemma}.

\begin{proof}
Let
\[
A^* = \argmax_{\{A \subseteq O_2 : \vert A \vert \leq \vert O_1 \vert\}} f(A)
\]
and let $A' = O_2 \setminus A^*$. Since $A^*$ is optimal for the function when restricted to a ground set $O_2$, by Lemma \ref{any-grow-lemma} and the optimality of $O_1$ for sets of size $\vert O_1 \vert$, we have
\begin{align*}
f(O_2) &= f(A^* \cup A') \\
&\leq \left(1 + \frac{\vert A' \vert}{\vert A^*\vert}\right) f(A^*) \\
&= \frac{\vert O_2 \vert}{\vert O_1 \vert} f(A^*) \\
&\leq\frac{\vert O_2 \vert}{\vert O_1 \vert} f(O_1).
\end{align*}
\end{proof}

We now prove Lemma \ref{any-grow-lemma}.

\begin{proof}
Define $f(v \mid A) = f(\{v\} \cup A) - f(A)$. Let $O = \{o_1, \ldots o_k\}$, where the ordering is arbitrary except that
\[
f(o_k \mid O \setminus \{o_k\}) = \argmin_{i=1, \ldots, k} f(o_i \mid O \setminus \{o_i\}).
\]
Let $A = \{a_1, \ldots, a_\ell\}$, where the ordering is arbitrary except that
\[
f(a_1 \mid O) = \argmax_{i=1, \ldots, \ell} f(a_i \mid O).
\]
We will first show that 
\begin{equation}\label{min-max-eq}
f(a_1 \mid O) \leq f(o_k \mid O \setminus \{o_k\}).
\end{equation}
By submodularity, we have
\[
f(a_1 \mid O) \leq f(a_1 \mid O \setminus \{o_k\}).
\]
If it was true that
\[
f(a_1 \mid O \setminus \{o_k\}) > f(o_k \mid O \setminus \{o_k\}),
\]
then we would have
\begin{align*}
f((O \setminus \{o_k\}) \cup \{a_1\}) &= f(a_1 \mid O \setminus \{o_{k}\}) +  \sum_{i = 1}^{k-1} f(o_i \mid \{o_1, \ldots, o_{i-1}\}) \\
&\geq \sum_{i = 1}^k f(o_i \mid \{o_1, \ldots, o_{i-1}\}) \\
&= f(O),
\end{align*}
contradicting the optimality of $O$, thus showing that Inequality \ref{min-max-eq} holds.

Now since for all $i \in \{1, 2, \ldots, k\}$
\begin{align*}
f(a_1 \mid O) &\leq f(o_k \mid O \setminus \{o_k\}) \\
 &\leq f(o_i \mid O \setminus \{o_i\}) \\
&\leq f(o_i \mid \{o_1, \ldots, o_{i-1}\}),
\end{align*}
it is worse than the average of $f(o_i \mid \{o_1, \ldots, o_{i-1}\})$, which is $\frac{1}{k}\sum_{i = 1}^k f(o_i \mid \{o_1, \ldots, o_{i-1}\})$, and showing that
\begin{equation}\label{lower-than-average}
f(a_1 \mid O) \leq \frac{1}{k}f(O).
\end{equation}

Finally, we have
\begin{align*}
f(O \cup A) &= f(O) + \sum_{i = 1}^\ell f(a_i \mid O \cup \{a_1, \ldots, a_{i - 1}\}) \\
&\leq f(O) + \sum_{i =1}^\ell f(a_i \mid O) \\
&\leq f(O) + \ell f(a_1 \mid O) \\
& \leq \left(1 + \frac{\ell}{k}\right) f(O),
\end{align*}
which is what we wanted to show.
\end{proof}

\subsection{Proof of Theorem \ref{threshold-theorem}}

\begin{proof}
Let $O$ be the optimal solution to the original problem. Let $F_\tau$ and $\overbar{F}_\tau$ be the functions defined restricting to the matrix elements with benefit at least $\tau$ and all remaining elements, respectively. If there exists a set $S$ of size $k$ such that $\mu n$ elements have a neighbor in $S$, then we have
\begin{align*}
F(O) &\leq F_\tau(O) + \overbar{F}_\tau(O) \\
&\leq F_\tau(O)  + n\tau \\
&\leq F_\tau(O) + \frac{1}{\mu} F_\tau(S) \\
&\leq \left(1 + \frac{1}{\mu}\right)F_\tau(O_\tau)
\end{align*}
where the last inequality follows from $O_\tau$ being optimal for $F_\tau$.
\end{proof}

\subsection{Proof of Lemma \ref{threshold-worst-case}}

\begin{proof}
Consider the following algorithm:
\begin{algorithmic}
\State $B \gets \emptyset$
\State $S \gets \emptyset$
\While{$\vert B \vert \leq c \delta n$}
	\State $v^* \gets \argmax \vert N(v) \vert$
	\State add $v^*$ to $S$
	\State add $N(v^*)$ to $B$
	\State remove $N(v^*) \cup \{v^*\}$ from $G$
\EndWhile
\end{algorithmic}
We will show that after $T = \frac{c}{(1 -2 c^2)\delta}$ iterations this algorithm will terminate. When it does, $S$ will satisfy $\vert N(S) \vert \geq c \delta n$ since every element of $B$ has a neighbor in $S$.

If there exists a vertex of degree $c \delta n$, then we will terminate after the first iteration. Otherwise all vertices have degree less than $c \delta n$. Assuming all vertices have degree less than $c \delta n$, until we terminate the number of edges incident to $B$ is at most $\vert B \vert c \delta n \leq c^2 \delta^2 n^2$. At each iteration, the number of edges in the graph is at least $(\frac{1}{2} - c^2)\delta^2 n^2$, thus in each iteration we can find a $v^*$ with degree at least $(1 - 2c^2)\delta^2n$. Therefore, after $T $ iterations, we will have terminated with the size of $S$ is at most $T$ and $\vert N(S) \vert \geq c \delta n$.
\end{proof}

We see that this is tight up to constant factors by the following proposition.

\begin{proposition}
There exists an example where for $\Delta = \delta^2 n$, the optimal solution to the sparsified function is a factor of $O(\delta)$ from the optimal solution to the original function.
\end{proposition}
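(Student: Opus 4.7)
The plan is to construct a simple two-cluster instance whose $\tau$-threshold graph contains only a small clique, so that the sparsified function loses almost all of the mass available in the original benefits. Partition the ground set $V = I = [n]$ into $B$ of size $\delta n$ and $A$ of size $(1-\delta) n$, and define the benefit matrix with three levels: $C_{ij} = 1$ for $i,j \in B$, $C_{ij} = \beta$ (a fixed constant strictly between $0$ and $\tau$, e.g.\ $\beta = 1/2$ with $\tau = 3/4$) for $i,j \in A$, and $C_{ij} = 0$ for pairs crossing $A$ and $B$. The $\tau$-threshold graph then consists of exactly the clique on $B$ together with isolated vertices on $A$, so it has $\binom{\delta n}{2} \approx \tfrac{1}{2}\delta^2 n^2$ edges; the average degree is therefore $\Delta = \delta^2 n$, matching the setup of Lemma~\ref{threshold-worst-case}.

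Next I would compute both optima directly. For the sparsified function, any set $S$ with $S \cap B \neq \emptyset$ achieves $F^{(\tau)}(S) = \delta n$ (each vertex of $B$ sees benefit $1$ from its clique-mate in $S$, each vertex of $A$ sees benefit $0$ since all cross-edges were zeroed out), and any $S \subseteq A$ achieves $F^{(\tau)}(S) = 0$; hence $F^{(\tau)}(O_\tau) = \delta n$ for every $k \geq 1$. For the original function, taking one representative from each cluster (which requires only $k \geq 2$) yields $F(S) = \delta n \cdot 1 + (1-\delta) n \cdot \beta$, so $\mathrm{OPT} \geq \delta n + \beta(1-\delta) n = \Theta(n)$ since $\beta$ is a constant bounded away from $0$.

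Combining these two calculations gives
\[
\frac{F(O_\tau)}{\mathrm{OPT}} \leq \frac{\delta n}{\delta n + \beta(1-\delta) n} = \frac{\delta}{\delta + \beta(1-\delta)} = O(\delta),
\]
which matches, up to constants, the $\Omega(\delta)$ approximation obtained by combining Theorem~\ref{threshold-theorem} with Lemma~\ref{threshold-worst-case} (taking $\mu = \Theta(\delta)$). The construction is essentially the ``two disjoint clusters'' example hinted at in the main text, and there is no real technical obstacle. The only points that require a brief check are (i) that no set $S$ can exceed value $\delta n$ in the sparsified instance, because every $i \in A$ has no neighbors in the threshold graph and so contributes $0$ no matter what $S$ is chosen, and (ii) that the edge count of the threshold graph is indeed $\tfrac{1}{2}\delta^2 n^2$, so that $\Delta = \delta^2 n$ as claimed; both of these are immediate from the definitions.
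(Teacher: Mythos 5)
Your construction is essentially the paper's: two disjoint blocks, a small one of size $\delta n$ with benefits above the threshold and a large one of size $(1-\delta)n$ with benefits below it, giving a threshold graph with $\Theta(\delta^2 n^2)$ edges, a sparsified optimum of value $\delta n$, and a true optimum of value $\Theta(n)$. However, there is a genuine gap in the last step. The proposition compares $F(O_\tau)$ (the \emph{original} value of a set that optimizes the sparsified function) to $\mathrm{OPT}$, but what you actually compute is $F^{(\tau)}(O_\tau) = \delta n$, and you then silently substitute this for $F(O_\tau)$ in the final ratio. These are not interchangeable: in your instance the sparsified optimum is massively non-unique --- any set meeting $B$ is optimal for $F^{(\tau)}$ --- and the optimizer consisting of one element of $B$ together with $k-1$ elements of $A$ has $F(O_\tau) = \delta n + \beta(1-\delta)n = \Theta(n)$, i.e.\ it is within a constant factor of $\mathrm{OPT}$. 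So the inequality $F(O_\tau) \leq \delta n$ is false for some legitimate choices of $O_\tau$, and the claimed bound only holds for a particular (adversarially chosen) optimizer.

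This is exactly the issue the paper's construction is engineered to avoid: it adds a diagonal perturbation $(1 + \frac{1}{k-1})I$ to the small block so that every sparsified optimizer is forced to spend its entire budget of $k$ elements inside that block (each new element strictly increases $F^{(\tau)}$ via its diagonal entry), which pins $F(O_\tau)$ at $\delta n + O(k)$ for \emph{every} optimizer. The fix to your argument is easy --- either add such a tie-breaking perturbation, or state explicitly that you are exhibiting one sparsified optimizer (namely $S \subseteq B$, for which $F(S) = \delta n$ exactly since all cross-benefits are zero) and argue that this suffices for a lower bound --- but as written the step from $F^{(\tau)}(O_\tau)$ to $F(O_\tau)$ does not go through.
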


\begin{proof}
Consider the following benefit matrix.
\[
C =
\begin{pmatrix}
\mathds{1}_{\delta n \times \delta n} + (1 + \frac{1}{k-1})I & 0 \\
0 & (1 - \frac{1}{(1 - \delta) n})\mathds{1}_{(1 - \delta n) \times (1 - \delta n)}
\end{pmatrix}
\]
The sparsified optimal would only choose elements in the top left clique and would get a value of roughly $\delta n$, while the true optimal solution would cover both cliques and get a value of roughly $n$.
\end{proof}

\end{document}